\newtheorem{proposition}{Proposition}
\newtheorem{definition}{Definition}
\newcommand{\argmin}{\operatornamewithlimits{argmin}}
\DeclareMathOperator{\dom}{dom}
\let\Ginclude@graphics\@org@Ginclude@graphics 
\title[Robust computation of optimal transport by  $\beta$-potential regularization]{Robust computation of optimal transport by  $\beta$-potential regularization}
  \author{\Name{Shintaro Nakamura} \Email{nakamurashintaro@g.ecc.u-tokyo.ac.jp}\\
  \addr The University of Tokyo, 5-1-5 Kashiwanoha, Kashiwa City, Chiba 277-8561
  \AND
  \Name{Han Bao} \Email{bao@i.kyoto-u.ac.jp}\\
  \addr Kyoto University, Yoshida-honmachi, Sakyo-ku, Kyoto 606-8501
  \AND
  \Name{Masashi Sugiyama} \Email{sugi@k.u-tokyo.ac.jp}\\
  \addr  RIKEN AIP center, Nihonbashi 1-chome Mitsui Building, 15th floor, 1-4-1 Nihonbashi, Chuo-ku, Tokyo 103-0027 \\
         The University of Tokyo, 5-1-5 Kashiwanoha, Kashiwa City, Chiba 277-8561
 }
\begin{document}

\maketitle

\begin{abstract}
Optimal transport (OT) has become a widely used tool in the machine learning field to measure the discrepancy between probability distributions. For instance, OT is a popular loss function that quantifies the discrepancy between an empirical distribution and a parametric model. Recently, an entropic penalty term and the celebrated Sinkhorn algorithm have been commonly used to approximate the original OT in a computationally efficient way. However, since the Sinkhorn algorithm runs a projection associated with the Kullback-Leibler divergence, it is often vulnerable to outliers. To overcome this problem, we propose regularizing OT with the $\beta$-potential term associated with the so-called $\beta$-divergence, which was developed in robust statistics. Our theoretical analysis reveals that the $\beta$-potential can prevent the mass from being transported to outliers. We experimentally demonstrate that the transport matrix computed with our algorithm helps estimate a probability distribution robustly even in the presence of outliers. In addition, our proposed method can successfully detect outliers from a contaminated dataset.
\end{abstract}
\begin{keywords}
Optimal transport; Robustness
\end{keywords}

\section{Introduction}
Many machine learning problems such as density estimation and generative modeling are often formulated by a discrepancy between probability distributions \citep{leastsquare,GAN}. As a common choice, the Kullback-Leibler (KL) divergence \citep{KLdivergence} has been widely used since minimizing the KL-divergence of an empirical distribution from a parametric model corresponds to maximum likelihood estimation. However, the KL-divergence suffers from some problems. For instance, the KL-divergence of $p$ from $q$ is not well-defined when the support of $p$ is not completely included in the support of $q$. Moreover, the KL-divergence does not satisfy the axioms of metrics in a probability space. On the other hand, \emph{optimal transport} (OT) \citep{OTbook} does not suffer from these problems. OT does not require any conditions on the support of probability distributions and thus is expected to be more stable than the KL-divergence. Therefore, the divergence estimator is less prone to diverge to infinity. In addition, OT between two distributions is a metric in a probability space and therefore defines a proper distance between histograms and probability measures \citep{OTformulation}. Owing to these nice properties, OT has been celebrated with many applications such as image processing \citep{Rabin} and color modifications \citep{Solomon}. \par
However, the ordinary OT suffers from heavy computation. To cope with this problem, one of the common approaches is to regularize the ordinary OT problem with an entropic penalty term (Boltzmann--Shannon entropy \citep{ROTandRMD}) and use the Sinkhorn algorithm \citep{Sinkhorn1967} to approximate OT \citep{Sinkhorn}. The entropic penalty makes the objective strictly convex, ensuring the existence of the unique global optimal solution, and the Sinkhorn algorithm projects this global optimal solution onto a set of couplings in terms of the KL-divergence, a divergence associated with the Boltzmann--Shannon entropy \citep{ROTandRMD}.
Unfortunately, the KL projection in statistical estimation is often not robust in the presence of outliers \citep{Basu}. In our pilot study, we experimentally confirmed that the Sinkhorn algorithm is easily affected by outliers (Figure \ref{toyexperiment}). As can be seen in Table \ref{table_toyexperiment}, the output value of the Sinkhorn algorithm  drastically increases even when only a small number of outliers are included in the dataset.
\begin{figure}[t]
    \begin{tabular}{cc}
      \begin{minipage}[t]{0.45\linewidth}
        \centering
        \includegraphics[width = 4.0cm]{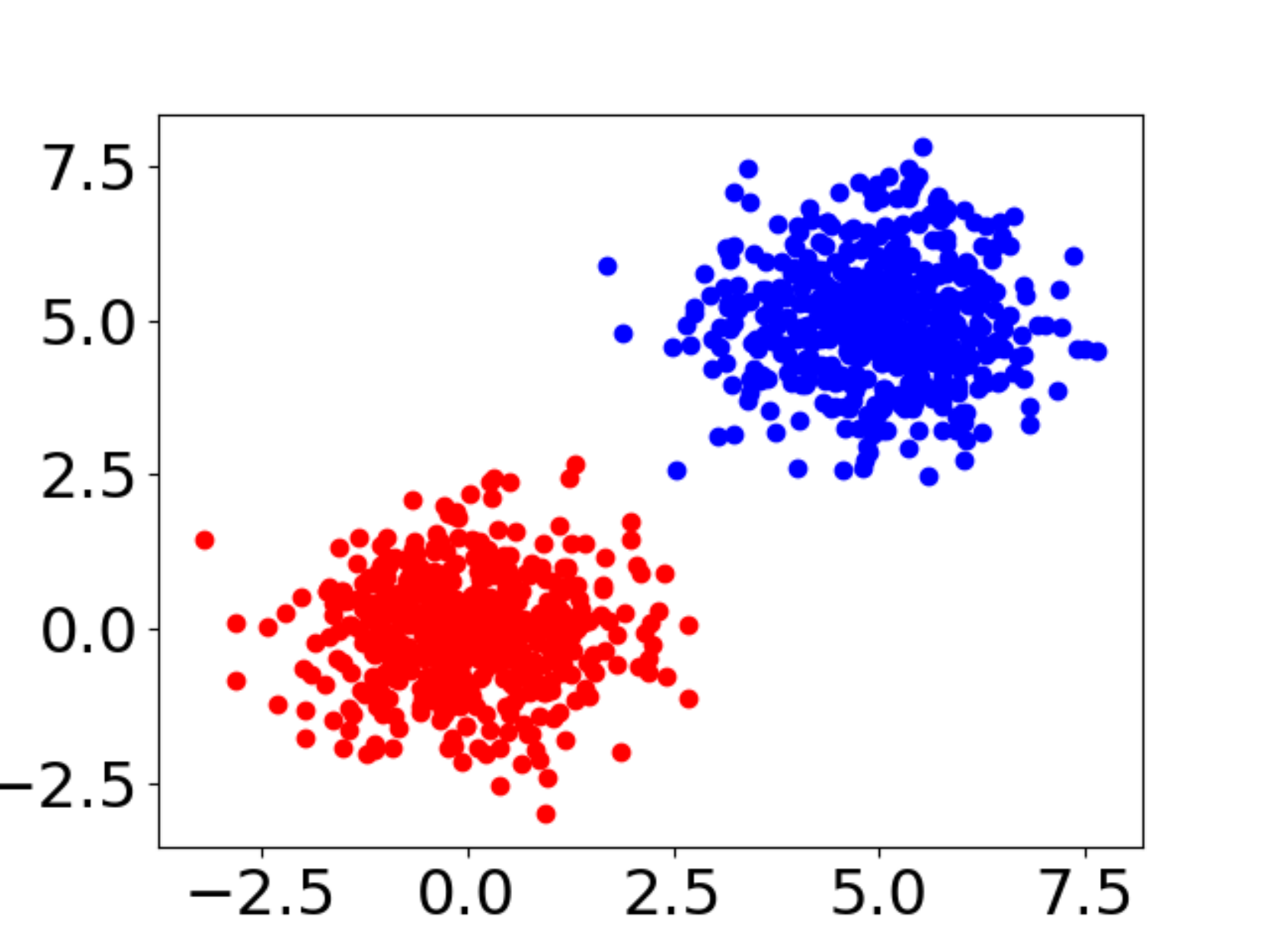}
        \subcaption{Sets of samples without outliers}
        \label{figure(a)}
      \end{minipage} &
      \begin{minipage}[t]{0.45\linewidth}
        \centering
        \includegraphics[width = 4.0cm]{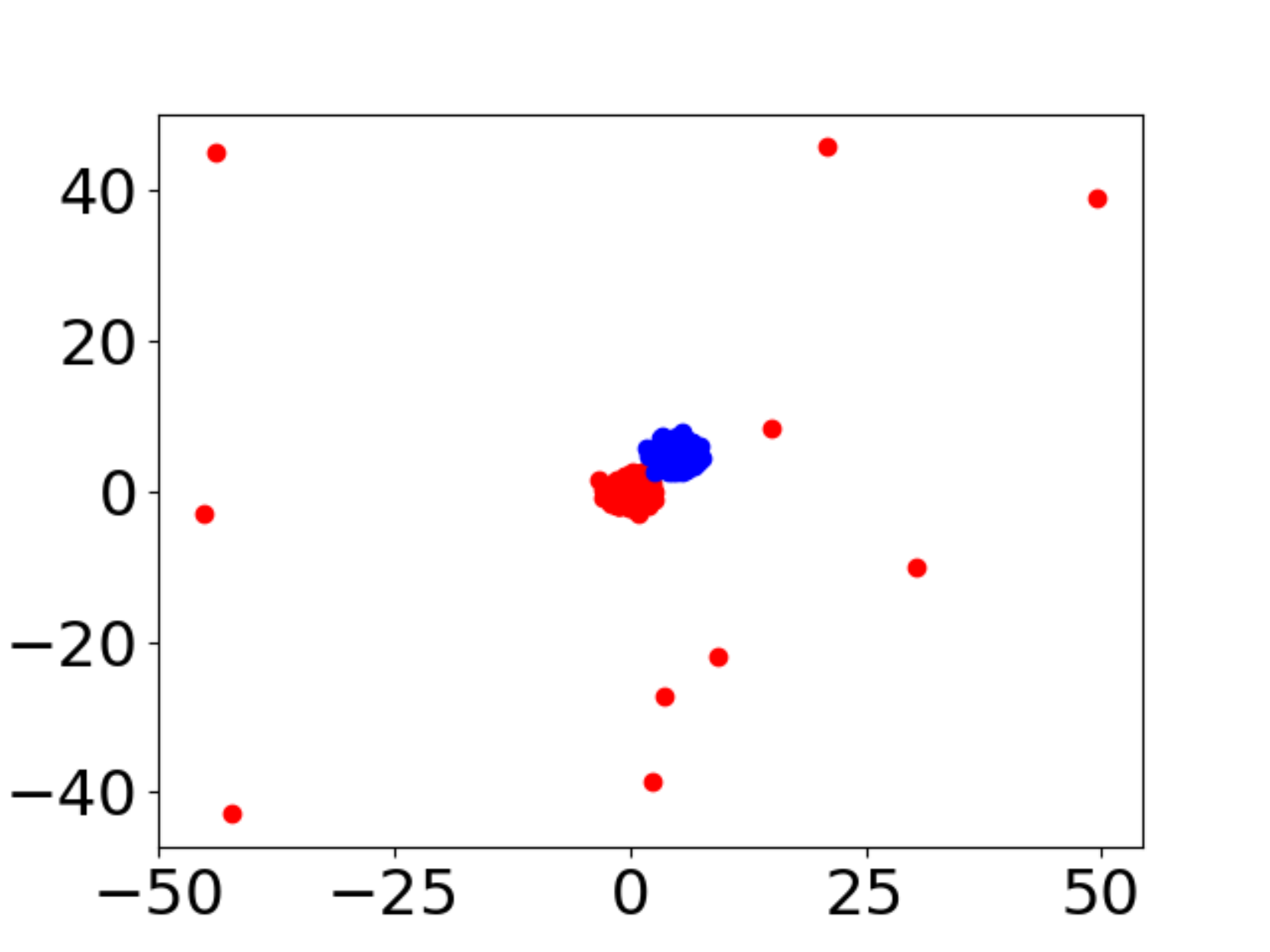}
        \subcaption{Sets of samples with outliers}
        \label{figure(b)}
      \end{minipage}
    \end{tabular}
      \caption{(a) 500 samples (red) are drawn from $\mathcal{N}([0, 0]^{\top}, I)$ and  500 samples (blue) are from $\mathcal{N}([5, 5]^{\top}, I)$. $I$ is the two-dimensional identity matrix. (b) 10 samples from two-dimensional uniform distribution $\mathrm{U}\{(x, y)|-50\le x, y \le 50 \}$ are added to the red samples.}
  \label{toyexperiment}
 \end{figure}

\begin{table}[t]
    \caption{The ouput value of the Sinkhorn algorithm and our algorithm. The exact OT value is 50.13 for the sets of samples in Figure (\ref{figure(a)}).}
    \centering
  \begin{tabular}{ccc} 
    \toprule  & Figure~\ref{figure(a)} & Figure~\ref{figure(b)}  \\ 
    \midrule
    The Sinkhorn algorithm & 50.74 &  92.19 \\
    Our algorithm & 50.10 & 50.00 \\ 
    \bottomrule
  \end{tabular} 
    
  \label{table_toyexperiment}
\end{table}

The high sensitivity of the Sinkhorn algorithm may lead to undesired solutions in probabilistic modeling when we deal with noisy and adversarial datasets \citep{adversarial_attack}. Several existing works have tackled this challenge. \citeauthor{Staerman}(\citeyear{Staerman}) proposed a median-of-means estimator of the $1$-Wasserstein dual to suppress outlier sensitivity. However, the obtained solution is hard to be interpreted as an approximation to OT because the corresponding primal problem is unclear. On the other hand, the following works robustly approximate OT by sending only a small probability mass to outliers, allowing some violation of the coupling constraint: \citeauthor{Balaji}(\citeyear{Balaji}) used unbalanced OT \citep{Chizat2017} with the $\chi^{2}$-divergence as their $f$-divergence penalty on marginal violation to compute OT robustly. This formulation requires access to the outlier proportion, which is usually not available. Moreover, in Section \ref{outlierdetection_experiment}, we show their method relying on the optimization package CVXPY \citep{CVXPY} does not scale to large samples. \citeauthor{Mukherjee}(\citeyear{Mukherjee}) mainly focused on outlier detection by truncating the distance matrix in OT. As a downside, one needs to set an appropriate threshold to use their method, which is hardly known in advance. Hence, we still lack a robust OT formulation independent of sensitive hyperparameters, with easily-accessible primal transport matrices.\par
In this work, we propose to mitigate the outlier sensitivity of the Sinkhorn algorithm by regularizing OT with the $\beta$-potential term instead of the Boltzmann--Shannon entropy. This formulation can be regarded as a projection based on the $\beta$-divergence~\citep{Basu,Futami}.
With some computational tricks, our algorithm is guaranteed not to move any probability mass to outliers (Figure~\ref{P_beta_figures}). It also suggests that our algorithm computes an approximate OT between the inliers. The approximate OT computed by our method was 50.10 and 50.00 in the settings of Figures~\ref{figure(a)} and~\ref{figure(b)}, respectively (Table \ref{table_toyexperiment}), meaning that our method is less prone to be affected by outliers. Through numerical experiments, we demonstrate that our proposed method can measure a distance between datasets more robustly than the Sinkhorn algorithm. As a practical application, we show our proposed method can be applied to an outlier detection task.
\begin{figure}[t]
    \centering
    \includegraphics[width = 10cm]{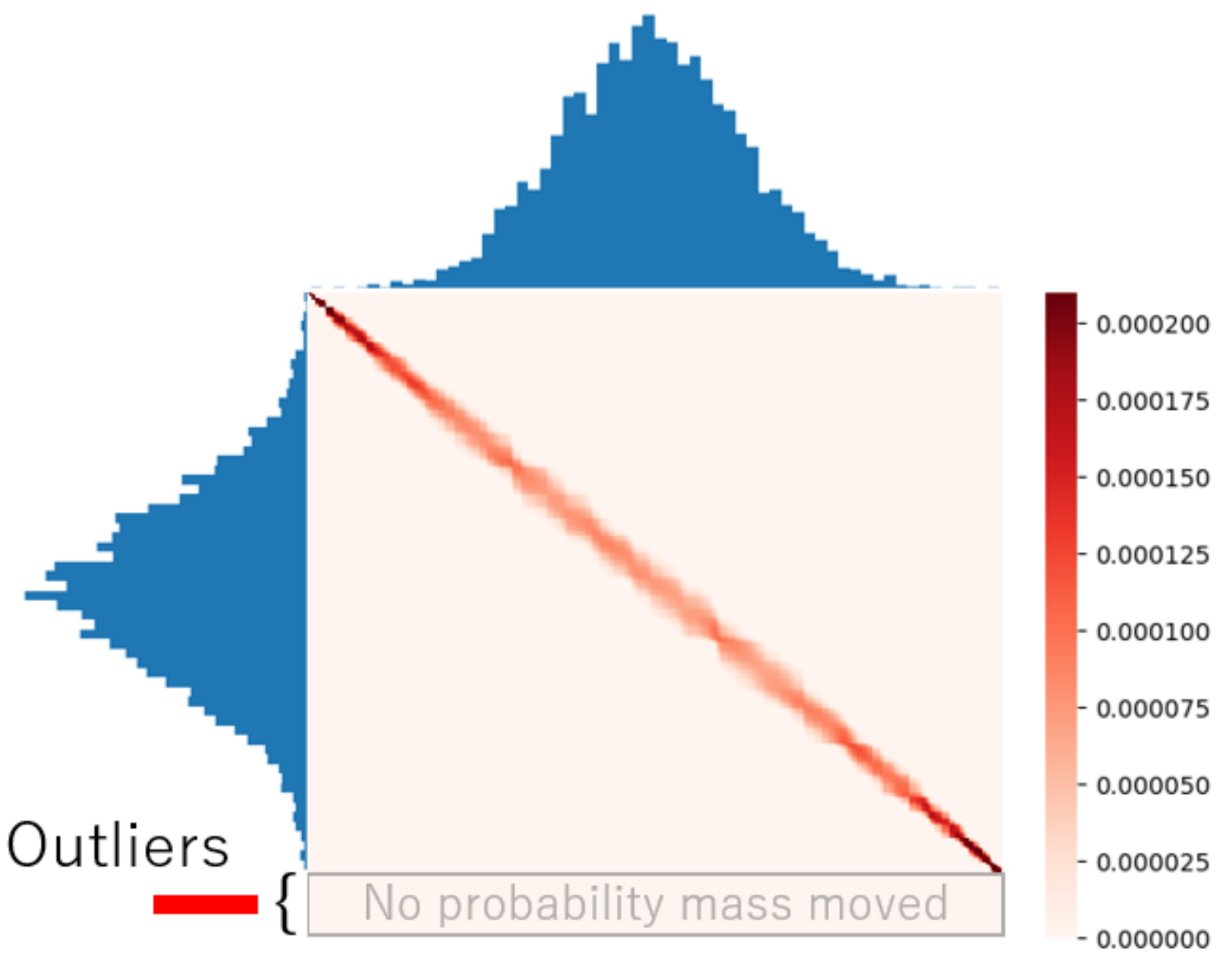}
    \caption{The heatmap of the transport matrix computed with our algorithm. The horizontal histogram is a set of 500 samples from a 1-dimensional standard normal distribution. The vertical histogram is a set of 495 samples from a 1-dimensional standard normal distribution added with 5 outliers with a value of 70. As can be seen from the heatmap, no mass was transported from outliers included in the source histogram.}
    \label{P_beta_figures}
\end{figure}

\section{Background}
In this section, we first show the formulation of ordinary discrete OT. Subsequently, we review the Bregman divergence. Finally, we introduce the convex regularized discrete OT (CROT) formulation and the alternate Bregman projection to obtain the solution to the CROT.
\subsection{Optimal Transport (OT)}
We introduce OT in a discrete setting. In this case, OT can be regarded as the cheapest plan to deliver items from $m$ suppliers to $n$ consumers, where each supplier and consumer has supply $\frac{1}{m}$ and demand $\frac{1}{n}$, respectively. In this work, we mainly focus on measuring the transportation cost between two probability distributions. Suppose we have two sets of independent samples $\{ \boldsymbol{x}_i \}_{i = 1}^{m}$ and $\{ \boldsymbol{y}_j\}_{j = 1}^{n}$ drawn from two distributions $P_x$ and $P_y$, respectively. We write the corresponding empirical measures by $\hat{P}_{x} \coloneqq \frac{1}{m} \sum_{i = 1}^{m} \boldsymbol{x}_i \delta_{\boldsymbol{x}_i}$ and $\hat{P}_{y} \coloneqq \frac{1}{n} \sum_{i = 1}^{n} \boldsymbol{y}_i \delta_{\boldsymbol{y}_i}$, where $\delta_{\boldsymbol{x}}$ is the delta function at position $\boldsymbol{x}$. Let $\boldsymbol{\gamma} \in \mathbb{R}_{+}^{m \times n}$ be the distance matrix, where $\gamma_{ij}$ denotes the distance between $\boldsymbol{x}_i$ and $\boldsymbol{y}_j$. Transport matrices are confined to 
\begin{align}\label{coupling_constraint}
    \left\{ \boldsymbol{\Pi} \in \mathbb{R}^{m \times n}_{+} \,\middle|\, \boldsymbol{\Pi} \boldsymbol{1}_n = \frac{\boldsymbol{1}_m}{m}, \boldsymbol{\Pi}^{\top} \boldsymbol{1}_m = \frac{\boldsymbol{1}_n}{n} \right\} \eqqcolon \mathcal{G}\left(\frac{\boldsymbol{1}_m}{m}, \frac{\boldsymbol{1}_n}{n}\right),
\end{align}
where $\mathbb{R}^{m \times n}_{+}$ is the set of non-negative reals. We call $\mathcal{G}(\frac{\boldsymbol{1}_m}{m}, \frac{\boldsymbol{1}_n}{n})$ the \emph{coupling constraint}.
In order to keep the notation concise, the Frobenius inner product between two matrices $\boldsymbol{\pi}, \boldsymbol{\gamma} \in \mathbb{R}_{+}^{m \times n}$ is denoted by
    \begin{equation}
        \langle \boldsymbol{\pi}, \boldsymbol{\gamma} \rangle \coloneqq \sum_{i, j}\pi_{ij}\gamma_{ij}.
    \end{equation}
Then, OT between the two empirical distributions $\hat{P}_{x}$ and $\hat{P}_{y}$ is defined as follows \citep{OTformulation}:
    \begin{eqnarray}\label{DiscreteOT}
        \mathrm{OT}(\hat{P}_{x} \| \hat{P}_{y}) &\coloneqq& \min_{\boldsymbol{\pi}\in \mathcal{G}(\frac{\boldsymbol{1}_m}{m}, \frac{\boldsymbol{1}_n}{n})} \langle\boldsymbol{\pi}, \boldsymbol{\gamma} \rangle.
    \end{eqnarray}

\subsection{Bregman divergence}
Let $\mathcal{E}$ be a Euclidean space with inner product $\langle \cdot, \cdot \rangle$ and induced norm $\|\cdot\|$.
Let $\phi:\mathcal{E}\to\mathbb{R}$ be a strictly convex function on $\mathcal{E}$ that is differentiable on $\mathrm{int}(\dom\phi) \neq \emptyset$. The \emph{Bregman divergence} generated by $\phi$ is defined as follows:
\begin{equation}
    B_{\phi}(\boldsymbol{x} \| \boldsymbol{y}) := \phi(\boldsymbol{x}) - \phi(\boldsymbol{y}) - \langle \boldsymbol{x} - \boldsymbol{y},  \nabla \phi(\boldsymbol{y})\rangle, 
\end{equation}
for all $\boldsymbol{x} \in \dom\phi$ and $\boldsymbol{y} \in \dom\phi$. In this paper, for the sake of simplicity, we consider the so-called \emph{separable} Bregman divergences \citep{ROTandRMD} over the set of transport matrices $\mathcal{G}\left( \frac{\boldsymbol{1}_m}{m}, \frac{\boldsymbol{1}_n}{n} \right)$, which can be decomposed as the element-wise summation:
\begin{eqnarray}
    B_{\phi}(\boldsymbol{\pi} \| \boldsymbol{\xi}) &=& \sum_{i = 1}^{m} \sum_{j = 1}^{n} B_{\phi}(\pi_{ij}\|\xi_{ij}), \\
    \phi(\boldsymbol{\pi}) &=& \sum_{i = 1}^{m} \sum_{j = 1}^{n} \phi(\pi_{ij}),
\end{eqnarray}
where we used $\phi: \mathbb{R} \to \mathbb{R}$ to denote the generator function same across all elements, with a slight abuse of notation.
Suppose now that $\phi$ is of the Legendre type \citep{Heinz}, and let $\mathcal{C} \subseteq \mathcal{E}$ be a closed convex set such that $\mathcal{C}\cap\mathrm{int}( \dom \phi ) \neq \emptyset$. Then, for any point $\boldsymbol{y} \in \mathrm{int}(\dom\phi)$, the following problem,
\begin{equation}
    T_{\mathcal{C}}(\boldsymbol{y}) = \argmin_{\boldsymbol{x} \in \mathcal{C}} B_{\phi}(\boldsymbol{x} \| \boldsymbol{y}),
\end{equation}
has a unique solution. $T_{\mathcal{C}}(\boldsymbol{y})$ is called the \emph{Bregman} projection of $\boldsymbol{y}$ onto $\mathcal{C}$ \citep{ROTandRMD}.

\subsection{Formulation of CROT}
\begin{table}[t]
    \centering
  \begin{tabular}{ccc} 
    \toprule Regularization term & dom $\phi$ & dom $\psi$  \\ 
    \midrule
    $\beta$-potential ($\beta > 1$) & $\mathbb{R}_{+}$ &  $(\frac{1}{1 - \beta}, \infty)$ \\
    Boltzman-Shannon entropy& $\mathbb{R}_{+}$ & $\mathbb{R}$ \\ 
    \bottomrule
  \end{tabular} 
    \caption{Domains of each regularizer and its Fenchel conjugate.}
  \label{domain_of_regularizer}
\end{table}


Here, we give the formulation of the CROT and show that obtaining the optimal solution of the CROT corresponds to minimizing the Bregman divergence between two matrices. \par
The CROT is formulated as a regularized version of (\ref{DiscreteOT}) by $\phi$ as follows:
\begin{equation}\label{CROT}
    L_{\phi} (\boldsymbol{\pi}) \coloneqq \min_{\boldsymbol{\pi}\in \mathcal{G}(\frac{\boldsymbol{1}_m}{m}, \frac{\boldsymbol{1}_n}{n})} \langle \boldsymbol{\pi}, \boldsymbol{\gamma} \rangle + \lambda \phi(\boldsymbol{\pi}),
\end{equation}
where $\lambda > 0$ is a regularization parameter. Subsequently, we often work on the dual variable of $\boldsymbol{\pi}$. The dual variable $\boldsymbol{\theta}$ satisfies the following conditions:\footnote{This mapping based on gradients (not subgradients) is legitimate only when $\phi$ is of the Legendre type.}
\begin{eqnarray}
    \boldsymbol{\pi} &=& \nabla\psi(\boldsymbol{\theta}), \\
    \boldsymbol{\theta} & = & \nabla\phi(\boldsymbol{\pi}),
\end{eqnarray}
where $\psi$ is the Fenchel conjugate of $\phi$ \citep{ROTandRMD}.
The optimal solution of (\ref{CROT}) can be understood via the Bregman projection.
Let us consider the unconstrained version of (\ref{CROT}):
\begin{eqnarray}\label{non_constraint_CROT}
    \min_{\boldsymbol{\pi} \in \mathbb{R}^{m \times n}} \langle \boldsymbol{\pi}, \boldsymbol{\gamma} \rangle + \lambda \phi(\boldsymbol{\pi}).
\end{eqnarray}
Since $\langle \boldsymbol{\pi}, \boldsymbol{\gamma} \rangle$ is linear and $\phi$ is strictly convex with respect to $\boldsymbol{\pi}$, there is a unique optimal solution $\boldsymbol{\xi}$ for (\ref{non_constraint_CROT}):
\begin{equation}\label{xi}
    \boldsymbol{\xi} = \nabla\psi(-\boldsymbol{\gamma} / \lambda),
\end{equation}
which can be obtained by solving the first-order optimality condition of (\ref{non_constraint_CROT}) with the dual relationship $(\nabla \phi)^{-1} = \nabla\psi$:
\begin{equation}
    \boldsymbol{\gamma} + \lambda \nabla\phi(\boldsymbol{\xi}) = 0.
\end{equation}
Then, 
\begin{align}
    \boldsymbol{\pi}_{\lambda}^{*} &\coloneqq \argmin_{\boldsymbol{\pi} \in \mathcal{G}(\frac{\boldsymbol{1}_m}{m}, \frac{\boldsymbol{1}_n}{n})} L_{\phi}(\boldsymbol{\pi})\\
    &= \argmin_{\boldsymbol{\pi} \in \mathcal{G}(\frac{\boldsymbol{1}_m}{m}, \frac{\boldsymbol{1}_n}{n})}  B_{\phi}(\boldsymbol{\pi}\|\boldsymbol{\xi}),
\end{align}
where the last equality is due to the following equation:
\begin{equation}
    \langle \boldsymbol{\pi}, \boldsymbol{\gamma} \rangle + \lambda \phi(\boldsymbol{\pi}) - \lambda\phi(\boldsymbol{\xi}) - \langle \boldsymbol{\xi}, \boldsymbol{\gamma} \rangle = \lambda B_{\phi}(\boldsymbol{\pi} \| \boldsymbol{\xi}).
\end{equation}
Therefore, the solution of (\ref{CROT}) can be interpreted as the Bregman projection of the unconstrained solution $\boldsymbol{\xi}$ onto $\mathcal{G}(\frac{\boldsymbol{1}_m}{m}, \frac{\boldsymbol{1}_n}{n})$.
The Sinkhorn algorithm can be used to obtain a solution to OT regularized with the negative of Boltzmann--Shannon entropy
\begin{math}
    \phi(\pi) = \pi \log \pi - \pi + 1
\end{math}
(Table \ref{domain_of_regularizer}), and runs a projection associated with the KL-divergence where $B_{\phi}(\pi\|\xi) = \pi \log\frac{\pi}{\xi} - \pi + \xi$.
\subsection{Alternate Bregman projection}

Here, we demonstrate how the Bregman projection onto $\mathcal{G}(\frac{\boldsymbol{1}_m}{m}, \frac{\boldsymbol{1}_n}{n})$ is executed based on \citeauthor{ROTandRMD}[\citeyear{ROTandRMD}]. \par
Let $\mathcal{C}_0, \mathcal{C}_1, \mathcal{C}_2$ be the following convex sets:
\begin{eqnarray}
    \mathcal{C}_0 &=& \mathbb{R}_{+}^{m \times n}, \\
    \mathcal{C}_1 &=& \left\{ \boldsymbol{\pi} \in \mathbb{R}^{m \times n} \,\middle|\,  \boldsymbol{\pi}\boldsymbol{1}_n = \textstyle{\frac{\boldsymbol{1}_m}{m}} \right\}, \\
    \mathcal{C}_2 &=& \left\{ \boldsymbol{\pi} \in \mathbb{R}^{m \times n} \,\middle|\,  \boldsymbol{\pi}^{\top} \boldsymbol{1}_m = \textstyle{\frac{\boldsymbol{1}_n}{n}} \right\}. 
\end{eqnarray}
Then, $\mathcal{G}(\frac{\boldsymbol{1}_m}{m}, \frac{\boldsymbol{1}_n}{n})$ can be written as follows:
\begin{equation}
    \mathcal{G}(\textstyle{\frac{\boldsymbol{1}_m}{m}}, \textstyle{\frac{\boldsymbol{1}_n}{n}}) = \mathcal{C}_0 \cap \mathcal{C}_1 \cap \mathcal{C}_2.
\end{equation}
We can get the Bregman projection onto $\mathcal{G}(\frac{\boldsymbol{1}_m}{m}, \frac{\boldsymbol{1}_n}{n})$ by alternately performing projections onto $\mathcal{C}_0$, $\mathcal{C}_1$, and $\mathcal{C}_2$. \par

Next, let us consider the projection of a given matrix $\overline{\boldsymbol{\pi}} \in \mathrm{int}(\dom \phi)$ onto $\mathcal{C}_0$, $\mathcal{C}_1$, and $\mathcal{C}_2$. The corresponding projection onto each set is denoted by $\boldsymbol{\pi}_{0}^{*}$, $\boldsymbol{\pi}_{1}^{*}$, and $\boldsymbol{\pi}_{2}^{*}$, respectively. Subsequently, we show how to obtain them \citep{ROTandRMD} (see Section A in the supplementary file for details).
\subsubsection{Projection onto $\mathcal{C}_0$}
When considering the separable Bregman divergence, the projection onto $\mathcal{C}_{0}$ can be performed with a closed-form expression in terms of primal parameters:
\begin{equation}\label{non-negativity_constraint}
    \pi^{*}_{0, ij} = \max \{ 0, \overline{\pi}_{ij} \},
\end{equation}
where, $\pi^{*}_{0, ij}$ is the $(i, j)$-element of matrix $\boldsymbol{\pi}_{0}^{*}$. 
Since $\phi'$ is increasing, this is equivalently expressed in terms of the dual parameters of $\boldsymbol{\pi}_0^*$, $\boldsymbol{\theta}_0^*$, as
\begin{equation} \label{projectionC0}
    \theta_{0, ij}^{*} = \max \{ \phi'(0), \overline{\theta}_{ij} \}.
\end{equation}
Here, the dual coordinate of the input matrix $\overline{\boldsymbol{\pi}}$ is denoted by $\overline{\boldsymbol{\theta}} = \nabla \psi (\overline{\boldsymbol{\pi}})$.

\subsubsection{Projections onto $\mathcal{C}_1$ and $\mathcal{C}_2$}
Next, we consider the Bregman projection onto $\mathcal{C}_{1}$. The projection onto $\mathcal{C}_2$ can be executed in the same way and thus omitted here. The Lagrangian associated to the Bregman projection $\boldsymbol{\pi}_{1}^{*}$ of a given matrix $\overline{\boldsymbol{\pi}} \in \mathrm{int}(\dom\phi)$ onto $\mathcal{C}_1$ is given as follows:
\begin{align}
    \mathcal{L}_1 (\boldsymbol{\pi}, \boldsymbol{\mu})  =   \phi(\boldsymbol{\pi}) - \langle \boldsymbol{\pi},&  \nabla\phi(\overline{\boldsymbol{\pi}}) \rangle \nonumber + \boldsymbol{\mu}^{\top}(\boldsymbol{\pi} \boldsymbol{1}_n - \textstyle{\frac{\boldsymbol{1}_m}{m}}),
\end{align}
where $\boldsymbol{\mu} \in \mathbb{R}^{m}$ are Lagrange multipliers.
Their gradients are given on $\mathrm{int}(\dom\phi)$ by
\begin{align}
    \nabla_{\boldsymbol{\pi}}\mathcal{L}_{1}(\boldsymbol{\pi}, \boldsymbol{\mu})  =  \nabla_{\boldsymbol{\pi}}\phi(\boldsymbol{\pi}) - \nabla_{\boldsymbol{\pi}}\phi(\overline{\boldsymbol{\pi}}) + \boldsymbol{\mu}{\boldsymbol{1}_{n}}^{\top},
\end{align}
and by noting $(\nabla\phi)^{-1} = \nabla\psi$, $\nabla_{\boldsymbol{\pi}}\mathcal{L}_1(\boldsymbol{\pi}_1, \boldsymbol{\mu}) = \boldsymbol{0}_{m\times n}$ if and only if \citep{ROTandRMD},
\begin{align}\label{vanish_condition}
    \boldsymbol{\pi}_{1}^{*} =  \nabla\psi( \nabla\phi(\overline{\boldsymbol{\pi}}) - \boldsymbol{\mu}{\boldsymbol{1}_{n}}^{\top}).
\end{align}
By multiplying $\boldsymbol{1}_n$ on the both sides of (\ref{vanish_condition}), the following equation system is obtained:
\begin{align}
    \nabla\psi (\nabla\phi(\overline{\boldsymbol{\pi}}) - \boldsymbol{\mu}{\boldsymbol{1}_n}^{\top}) \boldsymbol{1}_n  =  \textstyle{\frac{\boldsymbol{1}_m}{m}}.
\end{align}
Due to the separability, the projection onto $\mathcal{C}_{1}$  can be divided into $m$ subproblems in each coordinate of the dual variable as follows:
\begin{align}\label{m_subproblem}
    \sum_{j = 1}^{n} \psi'(\overline{\theta}_{ij} - \mu_{i})  =  \frac{1}{m}.
\end{align}
To solve equation (\ref{m_subproblem}) with respect to $\mu_{i}$, we use the Newton--Raphson method \citep{NewtonRaphson}. \par
\begin{algorithm}[t]
\caption{Non-negative alternate scaling algorithm for $\beta$-divergence when $\beta > 1$}
\label{ouralgorithm}
\begin{algorithmic}[1]
\STATE $\tilde{\boldsymbol{\theta}} \leftarrow - \boldsymbol{\gamma} / \lambda$
\STATE\label{line2} $\boldsymbol{\theta^{*}} \leftarrow  \max \{ \nabla\phi(\boldsymbol{0}_{m \times n}), \tilde{\boldsymbol{\theta}} \}$ 
\FOR{$t$ = 1, 2, \ldots, $T$} \label{line3}
  \STATE\label{line4} $\boldsymbol{\tau} = \frac{\nabla\psi(\boldsymbol{\theta^{*}})\boldsymbol{1}_n - \frac{\boldsymbol{1}_m}{\mathit{m}}}{\nabla^{2}\psi(\boldsymbol{\theta}^{*}) \boldsymbol{1}_n}$
  \STATE\label{line5} $\boldsymbol{\tau} \leftarrow \max (\boldsymbol{\tau},\ \hat{\boldsymbol{\theta}}^{*} - \nabla\phi(\frac{\boldsymbol{1}_m}{m}))$ 
  \STATE\label{line6} $\boldsymbol{\tilde{\theta}} \leftarrow \boldsymbol{\tilde{\theta}} - \boldsymbol{\tau} {\boldsymbol{1}_n}^{\top}$
  \STATE\label{line7} $\boldsymbol{\theta^{*}} \leftarrow \max \{\nabla\phi(\bf{0}), \boldsymbol{\tilde{\theta}}\}$
  \STATE\label{line8} $\boldsymbol{\sigma} = \frac{{\boldsymbol{1}_m}^{\top} \nabla \psi(\boldsymbol{\theta^{*}}) - (\frac{\boldsymbol{1}_n}{\mathit{n}})^{\top}}{{\boldsymbol{1}_m}^{\top} \nabla^{2}\psi(\boldsymbol{\theta}^{*})}$ 
  \STATE\label{line9} $\boldsymbol{\sigma} \leftarrow \max (\boldsymbol{\sigma},\hat{\boldsymbol{\theta}}^{*} - \nabla\phi(\frac{\boldsymbol{1}_n}{\mathit{n}}))$
  \STATE\label{line10} $\boldsymbol{\tilde{\theta}} \leftarrow \boldsymbol{\tilde{\theta}} - \boldsymbol{1}_m \boldsymbol{\sigma} $ 
  \STATE\label{line11} $\boldsymbol{\theta^{*}} \leftarrow \max \{\nabla\phi(\boldsymbol{0}_{m \times n}), \boldsymbol{\tilde{\theta}}\}$
\ENDFOR\label{line12}
\STATE $\boldsymbol{\pi^{*} \leftarrow \nabla \psi(\theta^{*})}$ 
\end{algorithmic}
\end{algorithm}

\section{Outlier-robust CROT}
In this section, we first formalize a model of outliers. To make the CROT robust against outliers under the model, we propose the CROT with the $\beta$-potential ($\beta > 1$) and introduce how to compute the CROT with the $\beta$-potential. Finally, we show its theoretical properties.
\subsection{Definition of outliers}
In this paper, outliers are formally defined as follows. Suppose we have two datasets $\{ \boldsymbol{x}_i\}_{i = 1}^{m}$ and $\{\boldsymbol{y}_j\}_{j = 1}^{n}$. We assume $\{\boldsymbol{x}_i\}_{i = 1}^{m}$ are samples from a clean distribution, while $\{\boldsymbol{y}_j\}_{j = 1}^{n}$ are samples that are contaminated by outliers. Let $\boldsymbol{\gamma}$ be the distance matrix.

\begin{definition} \label{outlier_definition}
For $z>0$, the indices of  outliers $J$ are defined as follows:
\begin{equation}
    \forall j \in J, \ \forall i \in \{1, \ldots, m\}, \gamma_{ij} \geq z.
\end{equation}
\end{definition}

This means that any point in $\{\boldsymbol{y}_j\}_{j = 1}^{n}$ that is more than or equal to $z$ away from any point in $\{\boldsymbol{x}_i\}_{i = 1}^{m}$ is considered as outliers.\par

\subsection{$\beta$-potential regularization}
We use the $\beta$-potential
\begin{equation}
    \phi(\pi) = \frac{1}{\beta(\beta - 1)} (\pi^{\beta} - \beta \pi + \beta - 1),
\end{equation}
associated with the  $\beta$-divergence,
\begin{equation}
    B_{\phi}(\pi \| \xi) = \frac{1}{\beta(\beta - 1)} (\pi^{\beta} + (\beta - 1) \xi^{\beta} - \beta \pi \xi^{\beta - 1}),
\end{equation}
to robustify the CROT, where $\beta > 1$.
The domains of primal $\phi$ and its Fenchel conjugate $\psi$ are shown in Table \ref{domain_of_regularizer}. \par
Our proposed algorithm is shown in Algorithm \ref{ouralgorithm}. The dual coordinate of the unconstrained CROT solution is denoted by $\boldsymbol{\tilde{\theta}} = \nabla \phi (\boldsymbol{\xi})$. We execute the projections in the cyclic order of $\mathcal{C}_0 \rightarrow \mathcal{C}_1 \rightarrow \mathcal{C}_0 \rightarrow \mathcal{C}_2\rightarrow \mathcal{C}_0 \rightarrow \mathcal{C}_1 \rightarrow \mathcal{C}_0 \rightarrow \mathcal{C}_2 \rightarrow \cdots$. \par

Lines \ref{line2}, \ref{line7}, and \ref{line11} in Algorithm \ref{ouralgorithm} enforce the dual constraint $\theta_{ij}^* \ge \frac{1}{1-\beta}$ corresponding to $\dom\psi = (\frac{1}{1-\beta}, \infty)$  (Table \ref{domain_of_regularizer}).
Lines \ref{line4}--\ref{line6} correspond to the projection onto $\mathcal{C}_1$ implemented on the dual coordinate. Since the dual variable must satisfy $\theta_{ij}^* \ge \frac{1}{1-\beta}$ due to $\dom\psi = (\frac{1}{1-\beta}, \infty)$, we update the dual variable only once in the Newton--Raphson method (line \ref{line4}) since $\theta_{ij}^* \ge \frac{1}{1-\beta}$ is no longer guaranteed after the first update. Similarily, the projection onto $\mathcal{C}_2$ is shown in lines \ref{line8}--\ref{line10}. \par
The procedure in line~\ref{line5} is based on Section 4.6 in \citeauthor{ROTandRMD}(\citeyear{ROTandRMD}) accelerating the convergence of Algorithm~\ref{ouralgorithm} by truncating the optimization variable $\boldsymbol{\tau}$, which we describe subsequently. Recall that, for any $i$, we have the following condition, 
\begin{equation}
 \label{condition_of_pi}
 \forall j, \ 0\le \pi_{1, ij}^* \le \textstyle{\frac{1}{m}},
\end{equation}
implicitly from the coupling constraint (\ref{coupling_constraint}). Since naively updating Newton--Raphson method can ``overshoot'', we truncate $\boldsymbol{\tau}$ so that (\ref{condition_of_pi}) is satisfied after each update. Below, we show this condition is satisfied mathematically. Let $\hat{\boldsymbol{\theta}}^{*}$ be the $m$-dimensional vector whose $i$th element is the largest value in the $i$th row of $\boldsymbol{\theta}^{*}$ defined as follows:
\begin{eqnarray}
    \hat{\theta}^{*}_{i} &:=& \mathrm{max}\{ \theta^{*}_{ij} \}_{1 \le j \le n}.
\end{eqnarray}
Since $\phi$ is convex, 
\begin{eqnarray} 
    &0\le \pi_{1, ij}^* \le \frac{1}{m}& \nonumber\\
    \iff &  \phi'(0) \le \phi'(\pi_{1, ij}^{*}) = \theta^{*}_{1, ij} \le \phi'(\frac{1}{m})&
\end{eqnarray}
holds.
Hence, for every $i$, if we lower-bound $\tau_{i}$, the Newton--Raphson decrement for the $i$th row of $\boldsymbol{\theta^{*}}$ as
\begin{eqnarray}
    \tau_{i} &\leftarrow& \max\{\tau_{i}, \hat{\theta}^{*}_{i} - \phi'\left(\textstyle{\frac{1}{m}} \right)\}, 
\end{eqnarray}
then, for any $j$,
\begin{eqnarray}
    \tilde{\theta}_{ij} - \tau_{i} &\leq& \theta^{*}_{ij} - \tau_{i}\\
                                   &\leq& \hat{\theta}^{*}_{i} - \tau_{i}\\
                                   &\leq& \phi'\left(\textstyle{\frac{1}{m}}\right).
\end{eqnarray}
This means that every element in the $i$th row of $\tilde{\boldsymbol{\theta}}$ computed in line \ref{line6} in Algorithm \ref{ouralgorithm} is no larger than $\phi'(\frac{1}{m})$.  After line~\ref{line7}, $\boldsymbol{\theta}^{*}$ satisfies the condition (\ref{condition_of_pi}). Similarly, we force $\pi_{2, ij}$ to satisfy the following conditions:
\begin{equation}
    \forall i, \ 0\leq \pi_{2, ij} \leq \textstyle{\frac{1}{n}}.
\end{equation}
After line \ref{line11}, this condition is satisfied.

\subsection{Theoretical analysis}
In the presence of outliers, we expect to approximate the OT by preventing mass transport to outliers.
This property is formalized below.
\begin{definition} 
 Suppose $\boldsymbol{\pi} \in \mathbb{R}_{+}^{m \times n}$ and a set of indices O $\subseteq \{ 1, \ldots, n \}$ satisfies the following condition:
\begin{equation}\label{transportnomass}
    \forall i, \pi_{ij} = 0 \ \ \mathrm{if} \ \ j \in O.
\end{equation}
Then, we say $\boldsymbol{\pi}$ transports no mass to O.
\end{definition}
Although we do not expect to transport any mass to outliers, the optimal solution of the CROT must satisfy the coupling constraint and then the condition (\ref{transportnomass}) is never satisfied. To ensure (\ref{transportnomass}), we consider solving the CROT with only a finite number of updates subsequently. Then, an intermediate solution can satisfy (\ref{transportnomass}), although the coupling constraint is not satisfied. This is in stark contrast to the previous works \citep{Chizat2017,Balaji}, which cannot avoid transporting some mass to outliers.\par
The following proposition provides sufficient conditions on the number of iterations $T$ to ensure the condition (\ref{transportnomass}). Refer to Section B in the supplementary file for the proof.
\begin{proposition}\label{proposition}
For a given $z \ (>\frac{\lambda}{\beta - 1})$, let $J\subseteq\{1, \ldots, n \}$ be a subset of indices which satisfies the condition shown in Definition \ref{outlier_definition}.
Suppose we obtained a transport matrix $\boldsymbol{\pi}^{\mathrm{output}}$ by running the alogrithm $T$ times satisfying the following condition:
\begin{equation}\label{robust_condition}
     T < \frac{\frac{z}{\lambda}(\beta - 1) - 1}{(\frac{1}{m})^{\beta - 1} + (\frac{1}{n})^{\beta - 1}}.
\end{equation}
Then, $\boldsymbol{\pi}^{\mathrm{output}}$ transports no mass to J.
\label{theoreticalanalysis}
\end{proposition}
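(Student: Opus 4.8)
The plan is to reduce the ``no mass to $J$'' conclusion to a one-dimensional statement about the dual iterate $\tilde{\boldsymbol{\theta}}$ and then bound its growth across the $T$ passes. First I would record the explicit derivative of the $\beta$-potential, $\phi'(\pi) = \frac{\pi^{\beta-1}-1}{\beta-1}$, so that $\phi'(0) = \frac{1}{1-\beta}$, and invert it to get the recovery map $\psi'(\theta) = (\phi')^{-1}(\theta) = (1+(\beta-1)\theta)^{1/(\beta-1)}$. The key observation is that $\psi'(\theta) = 0$ holds \emph{exactly} when $\theta = \phi'(0)$. Since lines \ref{line2}, \ref{line7}, and \ref{line11} set $\theta^{*}_{ij} = \max\{\phi'(0), \tilde{\theta}_{ij}\}$ entrywise, the output entry $\pi^{\mathrm{output}}_{ij} = \psi'(\theta^{*}_{ij})$ vanishes as soon as $\tilde{\theta}_{ij} \le \phi'(0)$. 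Hence it suffices to prove that $\tilde{\theta}_{ij} \le \phi'(0)$ for all $i$ and all $j \in J$ at termination.

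Next I would pin down the starting value and the update rule of $\tilde{\boldsymbol{\theta}}$. From the initialization $\tilde{\boldsymbol{\theta}} \leftarrow -\boldsymbol{\gamma}/\lambda$ together with Definition~\ref{outlier_definition}, every outlier column satisfies $\tilde{\theta}_{ij} = -\gamma_{ij}/\lambda \le -z/\lambda$. Inside the loop, $\tilde{\boldsymbol{\theta}}$ is modified only in lines~\ref{line6} and~\ref{line10}, where its $(i,j)$ entry is decreased by $\tau_i$ and by $\sigma_j$ respectively; thus a single pass changes $\tilde{\theta}_{ij}$ by exactly $-\tau_i - \sigma_j$, and I must upper bound this possible increase.

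The crux of the argument is that the truncations in lines~\ref{line5} and~\ref{line9} supply lower bounds on $\tau_i$ and $\sigma_j$. Line~\ref{line5} forces $\tau_i \ge \hat{\theta}^{*}_i - \phi'(\frac{1}{m})$, and because the $\mathcal{C}_0$ projections keep $\theta^{*}_{ij} \ge \phi'(0)$ entrywise, the row maximum obeys $\hat{\theta}^{*}_i \ge \phi'(0)$. Therefore $-\tau_i \le \phi'(\frac{1}{m}) - \phi'(0) = \frac{1}{\beta-1}(\frac{1}{m})^{\beta-1}$, and the identical reasoning applied to line~\ref{line9} gives $-\sigma_j \le \phi'(\frac{1}{n}) - \phi'(0) = \frac{1}{\beta-1}(\frac{1}{n})^{\beta-1}$. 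Consequently a single pass raises $\tilde{\theta}_{ij}$ by at most $\frac{1}{\beta-1}\left[(\frac{1}{m})^{\beta-1} + (\frac{1}{n})^{\beta-1}\right]$.

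Accumulating this bound over $T$ passes yields, for $j \in J$, the estimate $\tilde{\theta}_{ij} \le -\frac{z}{\lambda} + \frac{T}{\beta-1}\left[(\frac{1}{m})^{\beta-1} + (\frac{1}{n})^{\beta-1}\right]$. Requiring the right-hand side not to exceed $\phi'(0) = -\frac{1}{\beta-1}$ and rearranging reproduces exactly the bound~(\ref{robust_condition}), while the hypothesis $z > \frac{\lambda}{\beta-1}$ is precisely what makes the numerator $\frac{z}{\lambda}(\beta-1)-1$ positive so that the constraint is non-vacuous. I expect the main obstacle to be the bookkeeping in the third step: one must confirm that the invariant $\theta^{*}_{ij} \ge \phi'(0)$ truly holds at \emph{every} iteration (so that $\hat{\theta}^{*}_i \ge \phi'(0)$ may legitimately be invoked), and that the two sub-updates in lines~\ref{line6} and~\ref{line10} contribute additively within each pass rather than one being cancelled by the intervening $\mathcal{C}_0$ projection in line~\ref{line7}.
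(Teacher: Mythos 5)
Your proposal is correct and follows essentially the same route as the paper's own proof: reduce ``no mass to $J$'' to showing $\tilde{\theta}_{ij}\le\phi'(0)=\tfrac{1}{1-\beta}$ for $j\in J$, bound the per-pass increase of $\tilde{\theta}_{ij}$ by $-\tau_i-\sigma_j\le\frac{1}{\beta-1}\bigl[(\frac{1}{m})^{\beta-1}+(\frac{1}{n})^{\beta-1}\bigr]$ using the truncations and the invariant $\theta^*_{ij}\ge\phi'(0)$, and accumulate over $T$ passes to recover condition~(\ref{robust_condition}). The additional bookkeeping you flag (verifying the invariant at every iteration and the additivity of the two sub-updates across the intervening $\mathcal{C}_0$ projection) is glossed over in the paper's proof as well, so your version is, if anything, slightly more careful.
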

Here, $z>\frac{\lambda}{\beta - 1}$ is necessary so that $T$ is upper-bounded by a positive number.
Intuitively, this means that the transport matrix obtained by Algorithm \ref{ouralgorithm} disregards points distant from inliers more than or equal to $z$.
Note that the condition (\ref{robust_condition}) tells us that a sufficiently small number of iterations $T$ leads to an approximate CROT solution that does not transport any mass to outliers. \par
We discuss the selection of hyperparameters $\beta$ and $\lambda$ in Sections \ref{hyperparameter_selection}.

\section{Experiments}
Here, we show two applications of our method to demonstrate the practical effectiveness. In both of the experiments in Sections~\ref{dataset_distance_section} and \ref{outlierdetection_experiment}, we set the hyperparameters in the proposed method as $\beta = 1.2$ and $\lambda = 2$. We discuss the selection of these hyperparameters in Section~\ref{hyperparameter_selection}.

\subsection{Measuring distance between datasets}\label{dataset_distance_section}
In the first experiments, we numerically confirm that our method can compute the distance more robustly than the Sinkhorn algorithm. We used the following benchmark datasets: MNIST \citep{MNIST}, FashionMNIST \citep{FashionMNIST}, KMNIST \citep{KMNIST}, and EMNIST(Letters)\citep{EMNIST}. From each benchmark dataset, we randomly sampled 10000 data points, and split them into two subsets, each containing 5000 data. We regarded these data points as inliers. Then a portion of one subset was replaced by data from another benchmark dataset which were  regarded as outliers. We computed CROT and outlier-robust CROT between these two subsets, and investigated how they changed when the outlier ratio are 5\%, 10 \%, 15\%, 20\%, 25\% and 30\%.  We simply used the raw data to compute the distance matrix $\gamma_{ij} = \| \boldsymbol{x}_i - \boldsymbol{y}_j \|_{2}^{2}$, i.e., the Euclidean distance between raw data, and used their median value as the threshold $z$. In this way, we expect that outliers are distinguished from inliers.
The results are shown in Figure~\ref{dataset_distance}. Although the distance computed by the Sinkhorn algorithm drastically changes when the outlier ratio gets larger, the degree of change in the output values of our algorithm is milder in every dataset. Therefore, we can see that our algorithm computes the distance between datasets more stably than the Sinkhorn algorithm.

\begin{figure}
    \centering
    \includegraphics[width = 15cm]{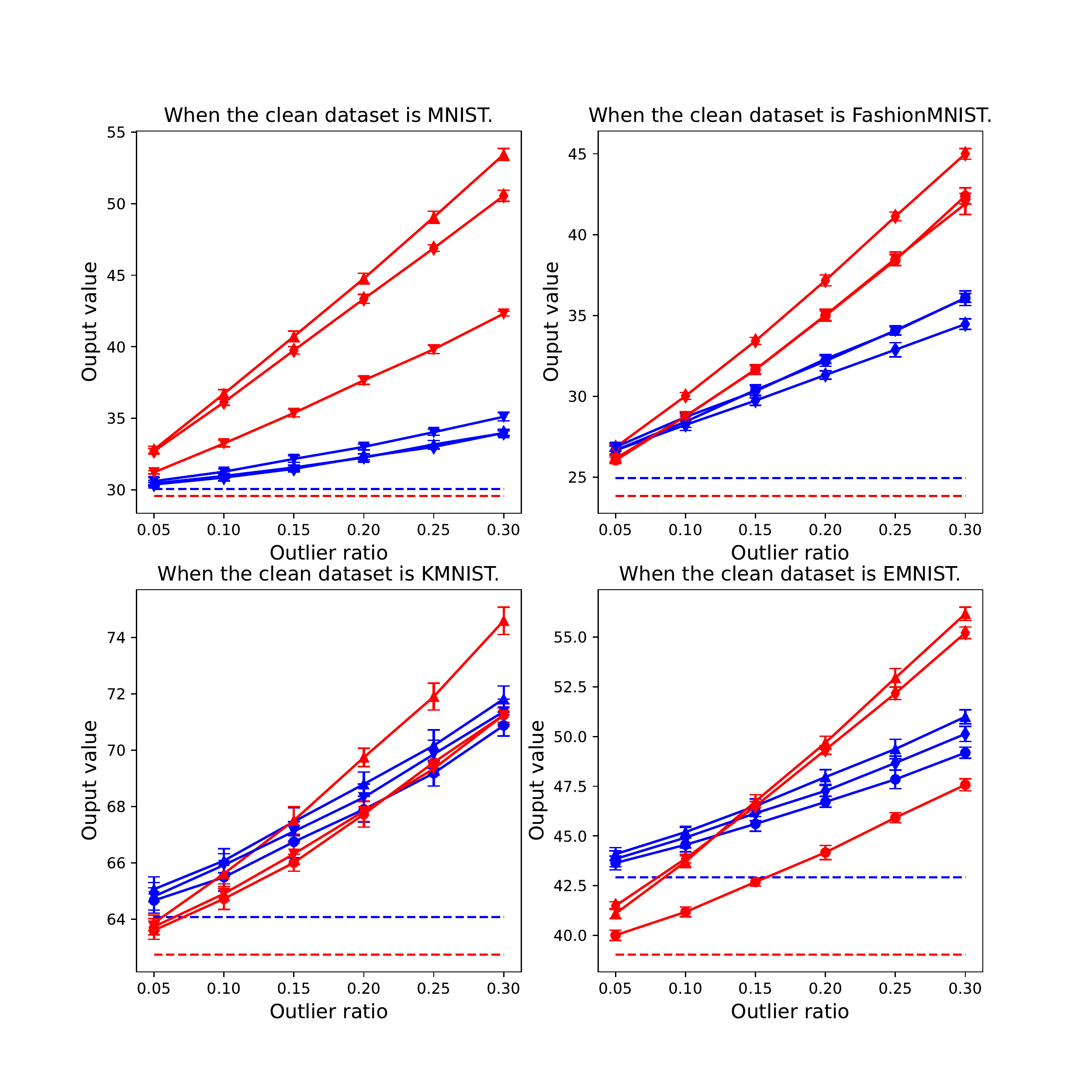}
    \caption{The mean and standard deviation of the output value of the Sinkhorn algorithm (red) and our algorithm (blue) over 20 runs. $\bigcirc$, $\triangle$, $\triangledown$, and $\diamondsuit$ represents when the outlier dataset are MNIST, FashionMNIST, KMNIST, and EMNIST, respectively. The dotted line is the output value when the dataset is clean.}
    \label{dataset_distance}
\end{figure}

\subsection{Applications to outlier detection}\label{outlierdetection_experiment}

\begin{table}
    \centering
  \begin{tabular}{ccc} 
    \toprule
     & Outliers & Inliers \\ 
    \midrule
    One-class SVM & 49.78 $\pm$ 1.83 \% & 49.99 $\pm$ 0.10 \%  \\
    Local outlier factor & 49.43 $\pm$ 3.68 \% & 99.13 $\pm$ 0.11 \% \\
    Isolation forest & 42.78 $\pm$ 6.95 \% & 72.81 $\pm$ 3.34 \% \\
    Elliptical envelope & 95.57 $\pm$ 2.77 \% & 69.37 $\pm$ 5.61 \% \\
    \midrule
    Baseline technique (95th) &  92.78 $\pm$ 1.67  \% & 92.66  $\pm$ 0.44 \% \\
    Baseline technique (97.5th) & 84.19  $\pm$ 2.10 \% & 96.41 $\pm$ 0.32 \% \\
    Baseline technique (99th) & 65.04  $\pm$ 2.75 \% & 98.60 $\pm$ 0.16 \% \\
    \midrule
    ROBOT (95th) &  99.96 $\pm$ 0.08 \% & 68.76  $\pm$ 0.49 \% \\
    ROBOT (97.5th) & 99.89 $\pm$ 0.14 \% & 77.22 $\pm$ 0.63 \% \\
    ROBOT (99th)  & 99.48 $\pm$ 0.31 \% &  84.79 $\pm$ 0.47 \% \\
    \midrule
    Our Method (95th) & 98.98  $\pm$ 0.66 \% & 86.72 $\pm$ 0.72 \% \\
    Our Method (97.5th) & 96.96  $\pm$ 1.71 \% & 91.58 $\pm$ 0.38 \% \\
    Our Method (99th) & 92.25 $\pm$ 1.53 \% & 95.73 $\pm$ 0.34 \% \\
    \bottomrule
  \end{tabular} 
    \caption{The percentage of true outliers/inliers detected as outliers/inliers over 50 runs. The numbers show the mean and standard deviation. ``(Xth)" means $X$th percentile was used in its subsampling phase.}
  \label{outlierdetection}
\end{table}

Our algorithm enables us to detect outliers. Let $\mu_m$ be a clean dataset and $\nu_n$ be a dataset which is polluted with outliers. We regard the $j$th data point in $\nu_n$ is an outlier if Algorithm~\ref{ouralgorithm} outputs a transport matrix whose $j$th column is all zeros.\par 
In this experiment, we used Fashion-MNIST \citep{FashionMNIST} as a clean dataset and MNIST \citep{MNIST} as outliers. $\nu_n$ consists of 9500 images from Fashion-MNIST and 500 images from MNIST. $\mu_m$ consists of 10000 images from Fashion-MNIST. We computed the transport matrix with the two datasets and identified the outlying MNIST images. We simply used the raw data to compute the distance matrix $\gamma_{ij} = \|\boldsymbol{x}_i - \boldsymbol{y}_j\|^2_2$, i.e., the Euclidean distance between raw data. \par

We compared the proposed method with the ``ROBust Optimal Transport" (ROBOT) method \citep{Mukherjee} and the method proposed by \citeauthor{Balaji}(\citeyear{Balaji})
, which are existing methods to compute OT robustly. 
We also compared our method with a variety of popular outlier detection algorithms available in scikit-learn \citep{Scikit-learn}: the one-class support vector machine (SVM) \citep{oneclassSVM}, local outlier factor \citep{localoutlierfactor}, isolation forest \citep{isolationforest}, and elliptical envelope \citep{ellipticalenvelope}. In the ROBOT method, we set the cost truncation hyperparameter to the (1) 95th (2) 97.5th (3) 99th percentile of the distance matrix in the subsampling phase \citep{Mukherjee}. \par
For our method, the distance tolerance parameter $z$ in Definition~\ref{outlier_definition} is necessary to detect outliers by leveraging Proposition~\ref{theoreticalanalysis}. Once $z$ is chosen, after running the algorithm  $\left \lfloor \frac{\frac{z}{\lambda}(\beta - 1) - 1}{(\frac{1}{m})^{\beta - 1} + (\frac{1}{n})^{\beta - 1}} \right \rfloor$ times satisfying the condition (\ref{robust_condition}), points in $\nu_n$ that are far from any points in $\mu_m$ with more than or equal to distance $z$ are regarded as outliers.
To determine $z$, we need a subsampling phase using the clean dataset similar to \citeauthor{Mukherjee}(\citeyear{Mukherjee}). We propose the following heuristics: since we know that $\mu_m$ is clean, we subsample two datasets from it and compute the distance matrix. Then, we choose the minimum value for each row and use the largest value among them as $z$. This procedure is essentially estimating the maximum distance between two samples in the clean dataset. In order to avoid subsampling noise, we used the (1) 95th (2) 97.5th (3) 99th percentile instead of the maximum. Additionaly, we compared our method with a natural baseline to identify a data point as an outlier if the minimum distance to the clean dataset is larger than the distance computed in the subsampling phase. We call this method ``the baseline technique''.
The results are shown in Table \ref{outlierdetection}. One can see that our method has a high performance in detecting not only outliers but also inliers. \par
\begin{table}[t]
    \centering
  \begin{tabular}{ccc}
    \toprule
     & Outliers & Inliers \\ 
    \midrule
    \citeauthor{Balaji}[\citeyear{Balaji}] & 89.0 $\pm$ 16.9  \% & 67.0 $\pm$ 8.9 \% \\
    Our Method & 96.6 $\pm$ 2.0 \% & 88.0 $\pm$ 0.7 \% \\
    \bottomrule
  \end{tabular} 
  \caption{Comparison with \citeauthor{Balaji}[\citeyear{Balaji}] with 1000 data points. The numbers show the mean and standard deviation of the percetage of the true outliers/inliers detected as outliers/inliers over 10 runs.}
  \label{Balaji_table}
\end{table}
We also tried the code of \citeauthor{Balaji}(\citeyear{Balaji}) based on CVXPY \citep{CVXPY}, which is not scalable so that the computational time is not negligible even with 1000 data points. Similar to the previous experiments, the clean dataset $\mu_m$ consists of 1000 Fashion-MNIST data points and the polluted dataset $\nu_n$ consists of 950 Fashion-MNIST data as inliers and 50 MNIST data points as outliers. Table \ref{Balaji_table} shows the mean accuracy and standard deviations over 10 runs. The run-time of their method was $820\pm17$ seconds, while that of our method was $6\pm0.2$ seconds. Our method outperforms the method by \citeauthor{Balaji}[\citeyear{Balaji}] in terms of not only outlier detection performance but also computation time.\par

\subsection{The selection of hyperparamters $\beta$ and $\lambda$}\label{hyperparameter_selection}
Here, we dicuss the selection of hyperparameters $\beta$ and $\lambda$. Figure~\ref{hp_sensitivity_MNIST} shows the sensitivity to the hyperparameters for the same outlier task above. We see $2 \leq \lambda \leq 14$ and $1.2 \leq \beta \leq 1.5$ are good choices of possible hyperparameters. \par
Then, how about other $\beta$ or $\lambda$? Since, we are running the algorithm $\left \lfloor \frac{\frac{z}{\lambda}(\beta - 1) - 1}{(\frac{1}{m})^{\beta - 1} + (\frac{1}{n})^{\beta - 1}} \right \rfloor$ times, if we choose $\beta$ excessively large or $\lambda$ excessively small, we will harmfully increase the computation time. On the other hand, if we choose $\beta$ excessively small or excessively $\lambda$ large, $\left \lfloor \frac{\frac{z}{\lambda}(\beta - 1) - 1}{(\frac{1}{m})^{\beta - 1} + (\frac{1}{n})^{\beta - 1}} \right \rfloor$ will become less than or equal to 0, which means that we can not start running the algorithm.
However, these discussions are when $z$ is fixed. If we scale the raw value of data by constant multiplication, $z$ will also change. By scaling $z$, we can adjust the number of times running the algorithm so that it will be larger than 0, and at the same time, not too large. 
In the MNIST detection task, we scaled the raw data so that the number of times running the algorithm will fit in $(0, 20)$ when $(\beta, \lambda)$ = (1.4, 14), (1.3, 10), (1.3, 12), (1.3, 14), (1.2, 6), (1.2, 8), (1.2, 10), (1.2, 12), (1.2, 14). We can see that scaling the raw data by constant multiplication has no problem in detecting outliers (Figure~\ref{hp_sensitivity_MNIST}). Therefore, since we can scale $z$, we can adjust the number of times running the algorithm with limited $\lambda\in [2, 14]$ and $\beta\in[1.2, 1.5]$.
\par
Below, we confirm that the proposed method is sufficiently stable in the above range of hyperparameters by using the credit card fraud detection dataset\footnote{https://www.kaggle.com/datasets/mlg-ulb/creditcardfraud}. 
We experimentally observe the sensitivity of the proposed method to the choice of the hyperparameters $\beta$ and $\lambda$. We used the credit card fraud detection dataset to verify that the proposed method is sufficiently stable in a certain range of the hyperparameters. \par
The credit card fraud detection dataset contains transactions made by credit cards in 2013 by European cardholders. Due to confidentiality issues, it does not provide the original features and more background information about the data. Instead, it contains 28-dimensional numerical feature vectors, which are the result of a principal component analysis transformation. We used these feature vectors to compute the cost matrix, which is the L2 distance among them. The task is to detect 450 frauds out of 9000 transactions. We conducted ten experiments for each pair of $\beta$ and $\lambda$. \par
We show the results in Figure \ref{hp_sensitivity_CreditCard}. We can see that the detection accuracy is sufficiently stable when $1.2\leq \beta \leq 1.5$ and $1\leq \lambda \leq 14$.

\begin{figure}[t]
    \begin{tabular}{cc}
      \begin{minipage}[t]{0.45\linewidth}
        \centering
        \includegraphics[width = 7.5cm]{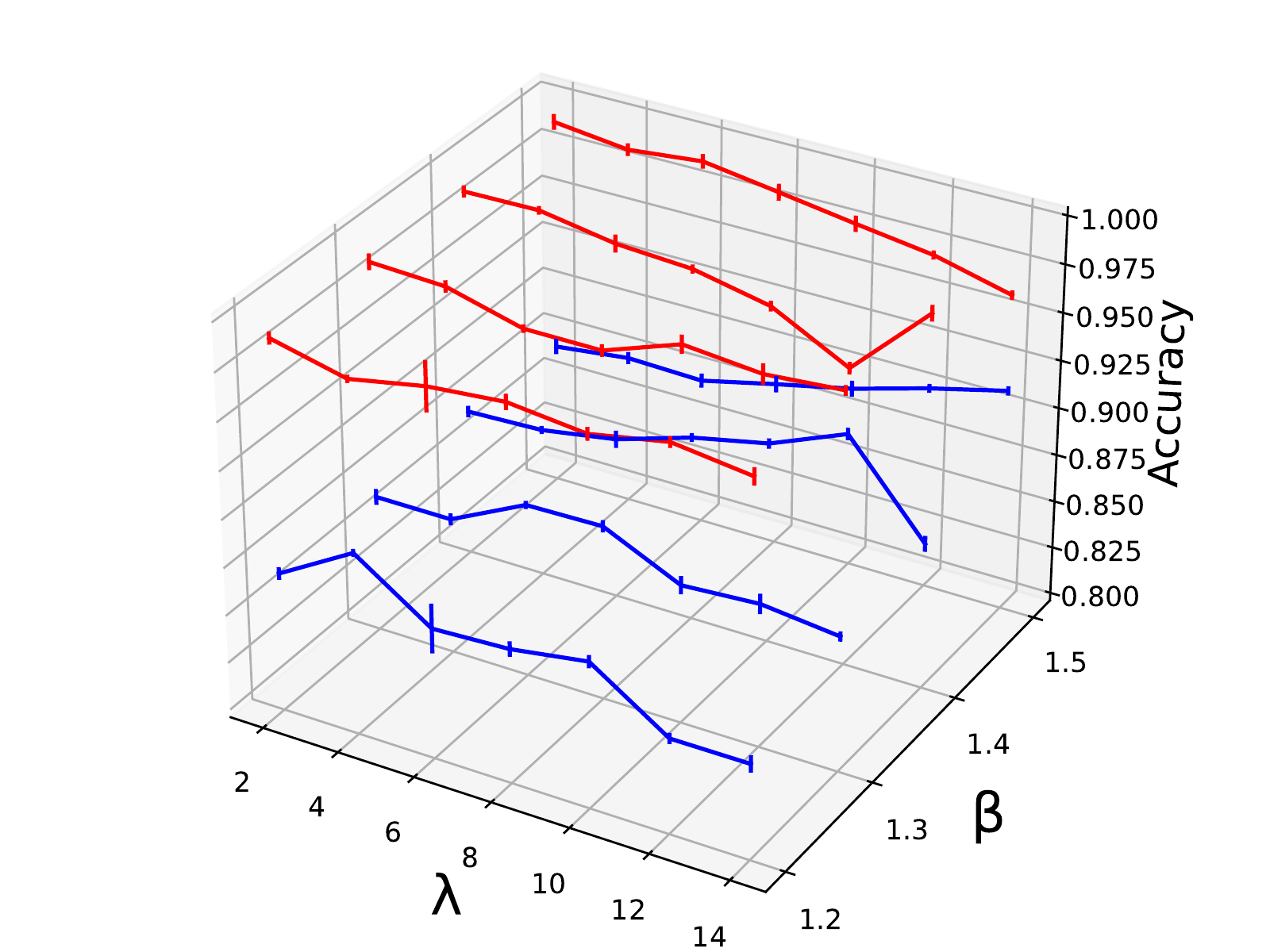}
        \subcaption{The hyperparameter sensitivity in the Fashion-MNIST detection task. (Blue) The inlier detection accuracy. (Red) The outlier detection arruracy. Error bars represent the mean and standard deviation.}
        \label{hp_sensitivity_MNIST}
      \end{minipage} &
      \begin{minipage}[t]{0.45\linewidth}
        \centering
        \includegraphics[width = 7.5cm]{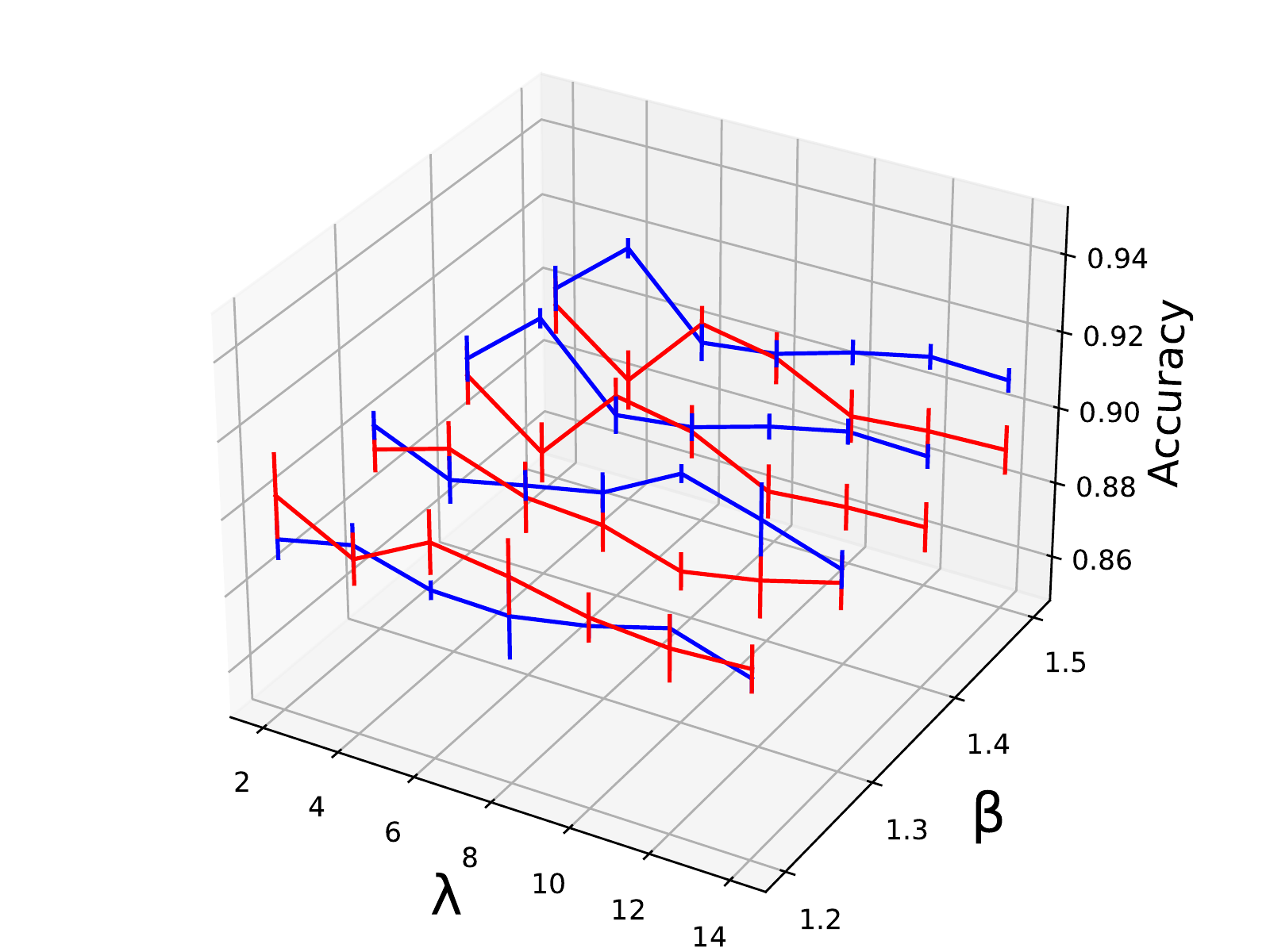}
        \subcaption{The hyperparameter sensitivity in the credit card fraud detection task. (Blue) The inlier detection accuracy. (Red) The outlier detection arruracy. Error bars represent the mean and standard deviation.}
        \label{hp_sensitivity_CreditCard}
      \end{minipage}
    \end{tabular}
      \caption{The hyperparameter ($\beta$ and $\lambda$) sensitivity in the Fashion-MNIST dataset and in the credit card fraud detection dataset.}
 \end{figure}

\section{Conclusion}
In this work, we proposed to robustly approximate OT by regularizing the ordinary OT with the $\beta$-potential term. By leveraging the domain of the Fenchel conjugate of the $\beta$-potential, our algorithm does not move any probability mass to outliers. We demonstrated that our proposed method can be used in estimating a probability distribution robustly even in the presence of outliers and successfully detecting outliers from a contaminated dataset.

\acks{}
SN was supported by JST SPRING, Grant Number JPMJSP2108. MS was supported by JST CREST Grant Number JPMJCR18A2.
\bibliography{acml22}

\newpage
\appendix

\section{Details of the Bregman projection}\label{apd:first}
Here, we demonstrate the details of our algorithm inspired by the Non-negative alternate scaling algorithm (NASA) introduced by \cite{ROTandRMD}, which is an algorithm to obtain the solution for CROT. Although our outlier-robust CROT does not satisfy the assumptions required for CORT, we show our algorithm is constructed similarly to the NASA algorithm. \par
First, we introduce basics of convex analysis as preliminaries. Next, we explain the alternate scaling algorithm, which is the basis of the NASA algorithm, and the required assumptions for it. Finally, we show the NASA algorithm for the separable Bregman divergence, and how we borrowed their idea to construct our algorithm.

\subsection{Convex analysis}
Let $\mathcal{E}$ be a Euclidean space with inner product $\langle \cdot, \cdot \rangle$ and induced norm $\|\cdot\|$. The boundary, interior, and relative interior of a subset $\mathcal{S} \subseteq \mathcal{E}$ are denoted by bd($\mathcal{S}$), int($\mathcal{S}$), and ri($\mathcal{S}$), respectively. Recall that for a convex set $\mathcal{C}$, we have
\begin{equation}
    \mathrm{ri} (\mathcal{C}) = \{ \boldsymbol{x} \in \mathcal{E} \ |\  \forall \boldsymbol{y} \in \mathcal{C}, \  \exists \lambda > 1,\ \lambda\boldsymbol{x} + (1 - \lambda)\boldsymbol{y} \in \mathcal{C} \}.
\end{equation}
In convex analysis, scalar functions are defined over the whole space $\mathcal{E}$ and take values in $\mathbb{R}\cup \{-\infty, \ \infty \}$. The effective domain, or simply domain, of a function $f$ is defined as the set:
\begin{equation}
    \dom f = \{ \boldsymbol{x} \in \mathcal{E}\ | \ f(\boldsymbol{x}) < + \infty \}.
\end{equation}
\begin{definition}[Closed functions]
    A function $f:\mathbb{R}^{n}\rightarrow \mathbb{R}$ is said to be closed if for each $\alpha \in \mathbb{R}$, the sublevel set $\{ \boldsymbol{x} \in \dom f \ | \ f(\boldsymbol{x}) \leq \alpha \}$ is a closed set. 
\end{definition}
If $\dom f$ is closed, then $f$ is closed.
\begin{definition}[Proper functions]
    Suppose a convex function $f:\mathcal{E} \rightarrow \mathbb{R} \cup \{\pm \infty\}$ satisfies $f(\boldsymbol{x}) > -\infty$ for every $\boldsymbol{x} \in \dom f$ and there exists some point $\boldsymbol{x}_0$ in its domain such that $f(\boldsymbol{x}_0) < + \infty$. Then $f$ is called a proper function.
\end{definition}
A proper convex function is closed if and only if it is lower semi-continuous. \footnote{Let $X$ be a topological space. A funtion $f:X\rightarrow\mathbb{R}\cup\{-\infty, \infty\}$ is called lower semi-continuous at a point $x_0 \in X$ if for every $y<f(x_0)$ there exists a neighborhood $U$ of $x_0$ such that $f(x)>y$ for all $x\in U$.} A closed function $f$ is continuous relative to any simplex, polytope of a polyhedral subset in $\dom f$. A convex function $f$ is always continuous in the relative interior ri($\dom f$).
\begin{definition}[Essential smoothness \cite{Heinz}]
Suppose f is a closed convex proper function on $\mathcal{E}$ with $\mathrm{int}(\dom f) \neq \emptyset$. Then f is essentially smooth, if f is differentiable on $\mathrm{int}(\dom f )$ and 
\begin{center}
    $\begin{array}{c}
    \forall n\in\mathbb{N}, \  x_n \in \mathrm{ int}(\dom f),  \\
    x_n \rightarrow x\in \mathrm{bd}(\dom f)
\end{array} \Bigr\} \Rightarrow \|\nabla f(\boldsymbol{x}_n)\| \rightarrow \infty.$
\end{center}

\end{definition}
\begin{definition}[Essential strict convexity \cite{Heinz}]
Let $\partial f$ be the subgradient of $f$. Suppose f is closed convex proper on $\mathcal{E}$. Then, f is essentially strictly convex, if f is strictly convex on every convex subset of $\mathrm{dom(\mathit{\partial f)}}$.
\end{definition}
We define a set of functions called the Legendre type and Fenchel conjugate functions.
\begin{definition}[Legendre type \cite{Heinz}]
Suppose  f is a closed convex proper function on $\mathcal{E}$. Then, f is said to be of the Legendre type if f is both essentially smooth and essentially strictly convex.
\end{definition}
\begin{definition}[Fenchel conjugate \cite{ROTandRMD}]
The Fenchel conjugate $f^*$ of a function $f$ is defined for all $\boldsymbol{y} \in \mathcal{E}$ as follows:
\begin{equation}
    f^{*}(\boldsymbol{y}) = \sup_{\boldsymbol{x}\in \mathrm{int}(\dom f)} \langle \boldsymbol{x}, \boldsymbol{y} \rangle - f(\boldsymbol{x}).
\end{equation}
\end{definition}

The Fenchel conjugate $f^*$ is always a closed convex function and if $f$ is a closed convex function, then $(f^*)^* = f$, and $f$ is of the Legendre type if and only if $f^*$ is of the Legendre type. If $f^*$ is of the Legendre type, the gradient mapping $\nabla f$ is a homeomorphism\footnote{A function $f$ : $X \rightarrow Y$ between two topological spaces is a homeomorphism if it has the following three properties: (a) $f$ is a bijection. (b) $f$ is continuous. (c) The inverse function $f^{-1}$ is continuous.} between int(dom$f$) and int(dom$f^{*}$), with inverse mapping $(\nabla f)^{-1} = \nabla f^*$. This guarantees the existence of dual coordinate systems $\boldsymbol{x}(\boldsymbol{y}) = \nabla f^*(\boldsymbol{y})$ and $\boldsymbol{y}(\boldsymbol{x}) = \nabla f(\boldsymbol{x})$ on int(dom $f$) and int(dom $f^*$). \par
Finally, we say that a function $f$ is a cofinite if it satisfies
\begin{equation}
    \lim_{\lambda \rightarrow + \infty} f(\lambda \boldsymbol{x}) / \lambda = + \infty,
\end{equation}
for all nonzero $\boldsymbol{x} \in \mathcal{E}$. Intuitively, it means that $f$ grows super-linearly in every direction. In particular, a closed convex proper function is cofinite if and only if $\dom f^{*} = \mathcal{E}$.

\subsection{Alternate scaling algorithm}
Here, we show the details of obtaining the Bregman projection onto a convex set. Let $\phi$ be a function of the Legendre type with Fenchel conjugate $\phi^{*} = \psi$. In general, computing Bregman projections onto an arbitrary closed convex set $\mathcal{C}\subseteq\mathcal{E}$ such that $\mathcal{C}\cap \mathrm{int}(\dom\phi) \neq \emptyset$ is nontrivial \cite{ROTandRMD}. Sometimes, it is possible to decompose $\mathcal{C}$ into an intersection of finitely many closed convex sets:
\begin{equation}
    \mathcal{C} = \bigcap_{l = 1}^{s} \mathcal{C}_{l},
\end{equation}
where the individual Bregman projections onto the respective sets $\mathcal{C}_1, \ldots, \mathcal{C}_s$ are easier to compute. It is then possible to obtain the Bregman projections onto $\mathcal{C}$ by alternate projections onto $\mathcal{C}_1, \ldots, \mathcal{C}_s$ according to Dykstra's algorithm \citep{Dykstra}. \par
In more detail, let $\sigma:\mathbb{N}\rightarrow \{ 1, \ldots, s \}$ be a control mapping that determines the sequence of subsets onto which we project. For a given point $\boldsymbol{x}_{0} \in \mathrm{int(dom}\phi)$, the Bregman projection $T_{\mathcal{C}}(\boldsymbol{x}_{0})$ of $\boldsymbol{x}_0$ onto $\mathcal{C}$ can be approximated with Dykstra's algorithm by iterating the following updates:
\begin{equation}\label{DykstraAlgorithm}
    \boldsymbol{x}_{k + 1} \leftarrow T_{\mathcal{C}_{\sigma(k)}}(\nabla \psi (\nabla \phi (\boldsymbol{x}_{k} + \boldsymbol{y}^{\sigma(k)})),
\end{equation}
where the correction term $\boldsymbol{y}^{1}, \ldots, \boldsymbol{y}^{s}$ for the respective subsets are initialized with the null element of $\mathcal{E}$, and are updated after projection as follows:
\begin{equation}\label{DykstraCorrection}
    \boldsymbol{y}^{\sigma(k)} \leftarrow \boldsymbol{y}^{\sigma(k)} + \nabla\phi(\boldsymbol{x}_k) - \nabla \phi (\boldsymbol{x}_{k + 1}).
\end{equation}
Under some technical assumptions, the sequence of updates $(\boldsymbol{x}_k)_{k \in \mathbb{N}}$ converges in terms of some norm to $P_{\mathcal{C}}(\boldsymbol{x_0})$ with a linear rate. Several sets of such conditions have been studied \citep{Dhillon, Tseng, Bauschke}. Here, we use the following conditions proposed by \citeauthor{Dhillon} [\citeyear{Dhillon}] for the CROT framework:
\begin{itemize}
  \item The function $\phi$ is cofinite
  \item The constraint qualification $\mathrm{ri}(\mathcal{C}_1)\cap \cdots \cap \mathrm{ri}(\mathcal{C}_s)\cap\mathrm{int(dom}\phi) \neq \emptyset$ holds
  \item The control mapping $\sigma$ is essentially cyclic, that is , there exists a number $t\in\mathbb{N}$ such that $\sigma$ takes each output value at least once during any $t$ consecutive input values
\end{itemize}
Once these conditions are imposed, the convergence of Dykstra's algorithm is guaranteed.

\subsection{Technical assumptions for CROT to hold}\label{technicalassumptions}
Some mild technical assumptions are required on the convex regularizer $\phi$ and its Fenchel conjugate $\psi = \phi*$ for the CROT framework to hold. The assumptions are as follows:
\begin{enumerate}
    \item $\phi$ is of Legendre type. \label{assumption1}
    \item $(0, 1)^{d \times d} \subseteq \dom\phi$ \label{assumption2}
    \item $\dom\psi = \mathbb{R}^{d \times d}$ \label{assumption3}
\end{enumerate}
Some assumptions relate to required conditions for the definition of Bregman projections and convergence of the algorithms, while others are more specific to CROT problems.\par
The first assumption (\ref{assumption1}) is required for the definition of the Bregman projection. In addition, it guarantees the existence of dual coordinate systems on int($\dom \phi$) and int($\dom \psi$) via the homeomorphism $\nabla\phi = \nabla\psi^{-1}$.\par
The second assumption (\ref{assumption2}) ensures the constraint qualification $\mathcal{G}(\frac{\boldsymbol{1}_m}{m}, \frac{\boldsymbol{1}_n}{n}) \cap \mathrm{int}(\dom \phi)$ for the Bregman projection onto the transport polytope. \par
The third assumption (\ref{assumption3}) equivalently requires $\phi$ to be cofinite for convergence. 
\subsection{NASA algorithm}
In this subsection, we show the NASA algorithm based on Dykstra's algorithm constructed by projections on $\mathcal{C}_0$,$ \mathcal{C}_1$, and $\mathcal{C}_2$.
\begin{algorithm}[t]
\renewcommand{\thealgorithm}{}
\caption{NASA algorithm}
\label{NASA_separable}
\begin{algorithmic}
\STATE $\tilde{\boldsymbol{\theta}} \leftarrow - \boldsymbol{\gamma} / \lambda$
\STATE $\boldsymbol{\theta^{*}} \leftarrow  \max \{\nabla \phi(\boldsymbol{0}_{m \times n}), \tilde{\boldsymbol{\theta}} \}$
\REPEAT 
  \STATE $\boldsymbol{\tau} \leftarrow \boldsymbol{0}_{m}$
  \REPEAT
    \STATE $\boldsymbol{\tau} \leftarrow \boldsymbol{\tau} + \frac{\nabla \psi(\boldsymbol{\theta}^{*} - \boldsymbol{\tau}\boldsymbol{1}_{n}^{\top})\boldsymbol{1}_{n} - \frac{\boldsymbol{1}_{m}}{m}}{ \nabla^2 \psi(\boldsymbol{\theta}^{*}- \boldsymbol{\tau}\boldsymbol{1}_{n}^{\top}) \boldsymbol{1}_{n}}$
    \UNTIL convergence
  \STATE $\boldsymbol{\tilde{\theta}} \leftarrow \boldsymbol{\tilde{\theta}} - \boldsymbol{\tau} \boldsymbol{1}_{n}^{\top}$
  \STATE $\boldsymbol{\theta^{*}} \leftarrow \max \{ \nabla \phi(\boldsymbol{0}_{m \times n}), \boldsymbol{\tilde{\theta}\}}$
  \STATE $\boldsymbol{\sigma} \leftarrow \boldsymbol{0}_{n}^{\top}$
  \REPEAT
    \STATE $\boldsymbol{\sigma} \leftarrow \boldsymbol{\sigma} + \frac{\boldsymbol{1}_{m}^{\top} \nabla \psi(\boldsymbol{\theta}^{*} - \boldsymbol{1}_{m}\boldsymbol{\sigma}) - (\frac{\boldsymbol{1}_{n}}{n})^{\top}}{\boldsymbol{1}_{m}^{\top} \nabla^2\psi(\boldsymbol{\theta}^{*} - \boldsymbol{1}_{m}\boldsymbol{\sigma})}$
    \UNTIL convergence
  \STATE $\boldsymbol{\tilde{\theta}} \leftarrow \boldsymbol{\tilde{\theta}} - \boldsymbol{1}_{m} \boldsymbol{\sigma} $
  \STATE $\boldsymbol{\theta^{*}} \leftarrow \max \{\nabla \phi(\boldsymbol{0}_{m \times n}), \boldsymbol{\tilde{\theta}} \}$

\UNTIL convergence
\STATE $\boldsymbol{\pi^{*} \leftarrow \nabla\psi(\theta^{*})}$
\end{algorithmic}
\end{algorithm}
\subsubsection{Projcetion onto $\mathcal{C}_0$}
Let us consider the projection of given matrix $\overline{\boldsymbol{\pi}}$ onto $\mathcal{C}_0$. We denote this projection $P_{\mathcal{C}_0}(\boldsymbol{\overline{\pi}})$ by $\boldsymbol{\pi}_{0}^{*}$. Then, the Karush-Kuhn-Tucker conditions \citep{KKT_1, KKT_2} for $\boldsymbol{\pi}_{0}^{*}$ are as follows:
\begin{eqnarray}
    \boldsymbol{\pi}_{0}^{*} & \geq & \mathbf{0}, \label{primalfeasibility} \\
    \nabla \phi(\boldsymbol{\pi}_0^{*}) - \nabla \phi(\overline{\boldsymbol{\pi}}) &\geq& \mathbf{0} \label{dualfeasibility}, \\
    (\nabla \phi (\boldsymbol{\pi}_{0}^{*}) - \nabla \phi (\overline{\boldsymbol{\pi}})) \odot \boldsymbol{\pi}_{0}^{*} &=& \mathbf{0}, \label{complementaryslackness} \
\end{eqnarray}
where (\ref{primalfeasibility}) is the primal feasibility, (\ref{dualfeasibility}) is the dual feasibility, and (\ref{complementaryslackness}) is the complementary slackness. \par
Since we are thinking of the separable Bregman divergence, the projection onto $\mathcal{C}_{0}$ can be performed with a closed-form expression on primal parameters:
\begin{equation}\label{non-negativity_constraint}
    \pi^{*}_{0, ij} = \max \{ 0, \overline{\pi}_{ij} \},
\end{equation}
where, $\pi^{*}_{0, ij}$ is the $(i, j)$-element of matrix $\boldsymbol{\pi}_{0}^{*}$. 
Since $\phi'$ is increasing, this is equivalent on the dual parameters of $\boldsymbol{\pi}_0^*$, $\boldsymbol{\theta}_0^*$, to
\begin{equation} \label{projectionC0}
    \theta_{0, ij}^{*} = \max \{ \phi'(0), \overline{\theta}_{ij} \}.
\end{equation}
Here, the dual coordinate of the input matrix $\overline{\boldsymbol{\pi}}$ is denoted by $\overline{\boldsymbol{\theta}}$.

\subsubsection{Projection onto $\mathcal{C}_1$ and $\mathcal{C}_1$}
Next, we consider the Bregman projections of a given matrix $\overline{\boldsymbol{\pi}} \in \mathrm{int(dom}\phi)$ onto $\mathcal{C}_{1}$ and $\mathcal{C}_{2}$. 
For the projection onto $\mathcal{C}_1$ and $\mathcal{C}_2$, we employ the method of Lagrange multipliers. The Lagrangians with Lagrange multipliers $\boldsymbol{\mu} \in \mathbb{R}^{m}$ and $\boldsymbol{\nu} \in \mathbb{R}^{n}$ for the Bregman projections $\boldsymbol{\pi}_{1}^{*}$  and $\boldsymbol{\pi}_{2}^{*}$ of a given matrix $\overline{\boldsymbol{\pi}} \in \mathrm{int(dom}\phi)$ onto $\mathcal{C}_1$ and $\mathcal{C}_2$ respectively write as follows:
\begin{eqnarray}
    \mathcal{L}_1 (\boldsymbol{\pi}, \boldsymbol{\mu}) & = & \phi(\boldsymbol{\pi}) - \langle \boldsymbol{\pi}, \nabla \phi(\overline{\boldsymbol{\pi}}) \rangle + \mu^{\top}(\boldsymbol{\pi} \mathbf{1} - \frac{\mathbf{1}}{m}),\\
    \mathcal{L}_2 (\boldsymbol{\pi}, \boldsymbol{\nu}) & = & \phi(\boldsymbol{\pi}) - \langle \boldsymbol{\pi}, \nabla \phi(\overline{\boldsymbol{\pi}}) \rangle + \nu^{\top}(\boldsymbol{\pi}^{\top} \mathbf{1} - \frac{\mathbf{1}}{n}).
\end{eqnarray}
Their gradients are given on int(dom$\phi$) by
\begin{eqnarray}
    \nabla \mathcal{L}_{1}(\boldsymbol{\pi}, \boldsymbol{\mu}) & = & \nabla \phi (\boldsymbol{\pi}) - \nabla \phi(\overline{\boldsymbol{\pi}}) + \boldsymbol{\mu}\mathbf{1}^{\top}, \\
    \nabla \mathcal{L}_{2}(\boldsymbol{\pi}, \boldsymbol{\nu}) & = & \nabla \phi (\boldsymbol{\pi}) - \nabla \phi(\overline{\boldsymbol{\pi}}) + \mathbf{1}\boldsymbol{\nu}^{\top},
\end{eqnarray}
and vanish at $\boldsymbol{\pi}_{1}^{*}, \boldsymbol{\pi}^{*}_{2} \in \mathrm{int(dom}\phi)$ if and only if
\begin{eqnarray}
    \boldsymbol{\pi}_{1}^{*} & = & \nabla \psi (\nabla \phi(\overline{\boldsymbol{\pi}}) - \boldsymbol{\mu}\mathbf{1}^{\top}), \\
    \boldsymbol{\pi}_{2}^{*} & = & \nabla \psi (\nabla \phi(\overline{\boldsymbol{\pi}}) - \mathbf{1} \boldsymbol{\nu}^{\top}).
\end{eqnarray}
By duality, the Bregman projections onto $\mathcal{C}_1$, $\mathcal{C}_2$ are thus equivalent to finding the unique vectors $\boldsymbol{\mu}$, $\boldsymbol{\nu}$, such that the rows of $\boldsymbol{\pi}_{1}^{*}$ sum up to $\frac{\mathbf{1}}{m}$, respectively the columns of $\boldsymbol{\pi}_{2}^{*}$ sum up to $\frac{\mathbf{1}}{n}$:
\begin{eqnarray}
    \nabla \psi (\nabla \phi(\overline{\boldsymbol{\pi}}) - \boldsymbol{\mu}\mathbf{1}^{\top}) \mathbf{1} & = & \frac{\mathbf{1}}{m}, \\ 
    \nabla \psi (\nabla \phi(\overline{\boldsymbol{\pi}}) - \mathbf{1}\boldsymbol{\nu}^{\top})^{\top} \mathbf{1} & = & \frac{\mathbf{1}}{n}.
\end{eqnarray}
Again, since we are resticting ourselves to the separable Bregman divergence, we can compute the projection step more efficiently. Due to the separability, the projections onto $\mathcal{C}_{1}$ and $\mathcal{C}_{2}$ can be divided into $m$ and $n$ parallel subproblems in the search space of 1-dimension as follows:
\begin{eqnarray}
    \sum_{j = 1}^{n} \psi'(\overline{\theta}_{ij} - \mu_{i}) & = & \frac{1}{m}, \\
    \sum_{i = 1}^{m} \psi'(\overline{\theta}_{ij} - \nu_{j}) & = & \frac{1}{n}.
\end{eqnarray}
Here, we denote the dual coordinate of $\overline{\boldsymbol{\pi}}$ by $\overline{\boldsymbol{\theta}}$.\par
In order to obtain the Lagrange multipliers $\mu_{i}$ and $\nu_{j}$, we use the Newton-Raphson method. More specifically, we eploit the following functions:
\begin{eqnarray}
    f(\mu_{i}) &=& - \sum_{j = 1}^{n} \psi' (\overline{\theta}_{ij} - \mu_{i}), \\
    g(\nu_{j}) &=& - \sum_{i = 1}^{m} \psi' (\overline{\theta}_{ij} - \nu_{j}).
\end{eqnarray}
These functions are defined on the open intervals $(\hat{\theta}_{i} - \theta_{\mathrm{limit}}, + \infty)$ and $(\check{\theta}_{j} - \theta_{\mathrm{limit}}, + \infty)$, where $0<\theta_{\mathrm{limit}}<+\infty$ is such that $\dom\psi = (-\infty, \theta_{\mathrm{limit}})$, and $\hat{\theta}_{i} = \max\{ \overline{\theta}_{ij} \}_{1\leq j \leq n}$, $\check{\theta}_{j} = \max\{ \overline{\theta}_{ij} \}_{1\leq i \leq m}$. We can now obtain the unique solution to $f(\mu_{i}) = - \frac{1}{m}$ and $g(\nu_{j}) = - \frac{1}{n}$.
Starting with $\mu_{i} = 0$ and $\nu_{j} = 0$, the Newton-Raphson updates:
\begin{eqnarray}
    \mu_{i} &\leftarrow& \mu_{i} + \frac{\sum_{j = 1}^{n} \psi'(\overline{\theta}_{ij} - \mu_{i}) - \frac{1}{m}}{\sum_{j = 1}^{n} \psi'' (\overline{\theta}_{ij} - \mu_{i})}, \\
    \nu_{j} &\leftarrow& \nu_{j} + \frac{\sum_{i = 1}^{m} \psi'(\overline{\theta}_{ij} - \nu_{i}) - \frac{1}{n}}{\sum_{i = 1}^{m} \psi'' (\overline{\theta}_{ij} - \nu_{i})},
\end{eqnarray}
converge to the optimal solution with a quadraitic rate. To avoid storing the intermediate Lagrange multipliers, the updates can be directly written in terms of the dual parameters:
\begin{eqnarray}
    \theta^{*}_{1, ij} &\leftarrow& \theta^{*}_{1, ij} - \frac{\sum_{j = 1}^{n} \psi'(\theta^{*}_{1, ij}) - \frac{1}{m}}{\sum_{j = 1}^{n} \psi'' (\theta^{*}_{1, ij})}, \\
    \theta^{*}_{2, ij} &\leftarrow& \theta^{*}_{2, ij} - \frac{\sum_{i = 1}^{m} \psi'(\theta^{*}_{2, ij}) - \frac{1}{n}}{\sum_{i = 1}^{m} \psi'' (\theta^{*}_{2, ij})},
\end{eqnarray}
after initilalization by $\theta^{*}_{1, ij} \leftarrow \overline{\theta}_{ij}$, $\theta^{*}_{2, ij} \leftarrow \overline{\theta}_{ij}$. Here, $\theta_{1, ij}^{*}$ and $\theta_{2, ij}^{*}$ are the $i$th row and $j$th column of $\boldsymbol{\theta}^{*}_{1}$ and $\boldsymbol{\theta}^{*}_{2}$ respectively. $\boldsymbol{\theta}^{*}_{1}$ and $\boldsymbol{\theta}^{*}_{2}$ are the dual coordinates of $\boldsymbol{\pi}^{*}_{1}$ and $\boldsymbol{\pi}^{*}_{2}$ respectively.\par
From the above, starting from $\boldsymbol{\xi}$ and writing the successive vectors $\boldsymbol{\mu}^{(k)}$, $\boldsymbol{\nu}^{(k)}$ along iterations, we have:
\begin{eqnarray}
    \psi'(- \boldsymbol{\gamma}/\lambda) &\rightarrow& \psi'\Bigl(\max\{ \phi'(\mathbf{0}), \ -\boldsymbol{\gamma}/\lambda \} \Bigr) \nonumber\\
                                         &\rightarrow& \psi'\Bigl(\max\{ \phi'(\mathbf{0}), \ -\boldsymbol{\gamma}/\lambda \} - \boldsymbol{\mu}^{(1)}\mathbf{1}^{\top}\Bigr) \nonumber\\
                                         &\rightarrow& \psi'\Bigl(\max\{ \phi'(\mathbf{0}), \ -\boldsymbol{\gamma}/\lambda - \boldsymbol{\mu}^{(1)}\mathbf{1}^{\top}\ ) \} \Bigr) \nonumber\\
                                         &\rightarrow& \psi'\Bigl(\max\{ \phi'(\mathbf{0}), - \boldsymbol{\gamma}/\lambda - \boldsymbol{\mu}^{(1)}\mathbf{1}^{\top})\} - \mathbf{1}\boldsymbol{\nu}^{(1)\top} \Bigr) \nonumber\\
                                         &\rightarrow& \psi'\Bigl(\max\{ \phi'(\mathbf{0}), - \boldsymbol{\gamma}/\lambda - \boldsymbol{\mu}^{(1)}\mathbf{1}^{\top} - \mathbf{1}\boldsymbol{\nu}^{(1)\top} \} \Bigr) \nonumber\\
                                         &\rightarrow& \psi'\Bigl(\max\{ \phi'(\mathbf{0}), - \boldsymbol{\gamma}/\lambda - \boldsymbol{\mu}^{(1)}\mathbf{1}^{\top}) - \mathbf{1}\boldsymbol{\nu}^{(1)\top} \} + \boldsymbol{\mu}^{(1)}\mathbf{1}^{\top} - \boldsymbol{\mu}^{(2)}\mathbf{1}^{\top} \Bigr) \nonumber\\
                                         &\rightarrow& \psi'\Bigl(\max\{ \phi'(\mathbf{0}), - \boldsymbol{\gamma}/\lambda - \boldsymbol{\mu}^{(2)}\mathbf{1}^{\top} - \mathbf{1}\boldsymbol{\nu}^{(1)\top} \} \Bigr) \nonumber\\
                                         &\rightarrow& \psi'\Bigl(\max\{ \phi'(\mathbf{0}), - \boldsymbol{\gamma}/\lambda - \boldsymbol{\mu}^{(2)}\mathbf{1}^{\top}) - \mathbf{1}\boldsymbol{\nu}^{(1)\top} \} + \mathbf{1}\boldsymbol{\nu}^{(1)\top} - \mathbf{1}\boldsymbol{\nu}^{(2)\top} \Bigr) \nonumber\\
                                         &\rightarrow& \psi'\Bigl(\max\{ \phi'(\mathbf{0}), - \boldsymbol{\gamma}/\lambda - \boldsymbol{\mu}^{(2)}\mathbf{1}^{\top} - \mathbf{1}\boldsymbol{\nu}^{(2)\top} \} \Bigr) \nonumber\\
                                         &\rightarrow& \cdots \nonumber\\
                                         &\rightarrow& \psi'\Bigl(\max\{ \phi'(\mathbf{0}), - \boldsymbol{\gamma}/\lambda - \boldsymbol{\mu}^{(k)}\mathbf{1}^{\top} - \mathbf{1}\boldsymbol{\nu}^{(k)\top} \} \Bigr) \nonumber\\
                                         &\rightarrow& \psi'\Bigl(\max\{ \phi'(\mathbf{0}), - \boldsymbol{\gamma}/\lambda - \boldsymbol{\mu}^{(k)}\mathbf{1}^{\top} - \mathbf{1}\boldsymbol{\nu}^{(k)\top} \} + \boldsymbol{\mu}^{(k)} \mathbf{1}^{\top} - \boldsymbol{\mu}^{(k + 1)}\mathbf{1}^{\top} \Bigr) \nonumber\\
                                         &\rightarrow& \psi'\Bigl(\max\{ \phi'(\mathbf{0}), - \boldsymbol{\gamma}/\lambda - \boldsymbol{\mu}^{(k+1)}\mathbf{1}^{\top} - \mathbf{1}\boldsymbol{\nu}^{(k)\top} \} \Bigr) \nonumber\\
                                         &\rightarrow& \psi'\Bigl(\max\{ \phi'(\mathbf{0}), - \boldsymbol{\gamma}/\lambda - \boldsymbol{\mu}^{(k+1)}\mathbf{1}^{\top} - \mathbf{1}\boldsymbol{\nu}^{(k)\top} \} + \mathbf{1}\boldsymbol{\nu}^{(k)\top} - \mathbf{1}\boldsymbol{\nu}^{(k + 1)\top} \Bigr) \nonumber\\
                                         &\rightarrow& \psi'\Bigl(\max\{ \phi'(\mathbf{0}), - \boldsymbol{\gamma}/\lambda - \boldsymbol{\mu}^{(k+1)}\mathbf{1}^{\top} - \mathbf{1}\boldsymbol{\nu}^{(k + 1)\top} \} \Bigr) \nonumber \\
                                         &\rightarrow& \cdots \nonumber \\
                                         &\rightarrow& \boldsymbol{\pi}^{*}. \nonumber
\end{eqnarray}

An efficent algorithm then exploits the differences $\boldsymbol{\tau}^{(k)} = \boldsymbol{\mu}^{(k)} - \boldsymbol{\mu}^{(k - 1)}$ and $\boldsymbol{\sigma}^{(k)} = \boldsymbol{\nu}^{(k)} - \boldsymbol{\nu}^{(k - 1)}$ to scale the rows and columns (Algorithm \ref{NASA_separable}).

\subsection{The different point of our algorithm from NASA}
\begin{table}
    \centering
    \caption{Domain of Euclidean norm and $\beta$-potential ($\beta > 1$).}
  \begin{tabular}{ccc} \hline
    Regularization term & dom $\phi$ & dom $\psi$  \\ \hline
    $\beta$-potential ( $\beta > 1$ ) & $\mathbb{R}_{+}$ &  $(\frac{1}{1 - \beta}, \infty)$ \\ \hline
    Euclidean norm & $\mathbb{R}$ & $\mathbb{R}$ \\ \hline
  \end{tabular} 
  \label{domain_of_regularizer}
\end{table}

An example of applying NASA algorithm is when the regulariler is the Euclidean norm $\phi(\pi) = \frac{1}{2}(\pi - 1)^2$. As it is shown in Table \ref{domain_of_regularizer}, we can easily confirm the Euclidean norm satisfies the three assumptions introduced in \ref{technicalassumptions}. 
However, for the outlier-robust CROT, we use $\beta$-potential ($\beta > 1$) as the regularizer, which violates the third assumption, $\dom \psi = \mathbb{R}$ (Table \ref{domain_of_regularizer}). Therefore, we cannot naively apply the NASA algorithm for the outlier-robust CROT. For instance, lines 2, 7, and 11 in our algorithm are not mathematically correct as projections onto $\mathcal{C}_0$. Similarly, lines 4--6 and 8--10 are not mathematically correct for projections onto $\mathcal{C}_1$ and $\mathcal{C}_2$, respectively. \par
In spite of these mathematical issues, we still see lines 2, 7, and 11 in our algorithm as projections onto $\mathcal{C}_0$. In addition, since we cannot update the Newton-Raphson more than twice for projections onto $\mathcal{C}_1$ and $\mathcal{C}_2$ because $\boldsymbol{\theta}^{*} \in \dom \nabla \psi (=\dom \nabla \psi)$ is no longer guaranteed, we overcome this issue by only updating it once.

\section{The proof of Proposition 1}
\begin{proposition}
For a given $z \ (>\frac{\lambda}{\beta - 1})$, let $J\subseteq\{1, \ldots, n \}$ be a subset of indices which satisfies the condition shown in Definition 2.
Suppose we obtained a transport matrix $\boldsymbol{\pi}^{\mathrm{output}}$ by running the alogrithm $T$ times satisfying the following condition:
\begin{equation}\label{robust_condition}
     T < \frac{\frac{z}{\lambda}(\beta - 1) - 1}{(\frac{1}{m})^{\beta - 1} + (\frac{1}{n})^{\beta - 1}}.
\end{equation}
Then, $\boldsymbol{\pi}^{\mathrm{output}}$ transports no mass to J.
\label{theoreticalanalysis}
\end{proposition}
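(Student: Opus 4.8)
The plan is to reduce the claim ``$\boldsymbol{\pi}^{\mathrm{output}}$ transports no mass to $J$'' to a bound on a single scalar dual coordinate, and then to control how fast that coordinate can grow over the iterations. First I would record the explicit gradient maps: differentiating the $\beta$-potential gives $\phi'(\pi) = \frac{\pi^{\beta-1}-1}{\beta-1}$, so $\phi'(0) = \frac{1}{1-\beta}$ is precisely the left endpoint of $\dom\psi$, and the inverse map is $\psi'(\theta) = ((\beta-1)\theta+1)^{1/(\beta-1)}$. The key elementary observation is that $\pi^{\mathrm{output}}_{ij} = \psi'(\theta^*_{ij})$ vanishes if and only if $\theta^*_{ij} = \phi'(0)$. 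Because the last operation applied to $\boldsymbol{\theta}^*$ before output is the projection onto $\mathcal{C}_0$ in line~\ref{line11}, which sets $\theta^*_{ij} = \max\{\phi'(0),\tilde\theta_{ij}\}$, this is equivalent to $\tilde\theta_{ij} \le \phi'(0)$. Hence it suffices to show $\tilde\theta_{ij} \le \phi'(0)$ for every $i$ and every $j \in J$ after $T$ iterations.

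I would then track the accumulator $\tilde{\boldsymbol{\theta}}$ (the variable updated in lines~\ref{line6} and~\ref{line10}, not the projected $\boldsymbol{\theta}^*$). It is initialized to $-\boldsymbol{\gamma}/\lambda$, and one full pass of the loop decrements entry $(i,j)$ by $\tau_i + \sigma_j$. Writing $\tau_i^{(t)},\sigma_j^{(t)}$ for the values used in iteration $t$, this gives
\[
\tilde\theta_{ij} = -\frac{\gamma_{ij}}{\lambda} - \sum_{t=1}^{T}\bigl(\tau_i^{(t)} + \sigma_j^{(t)}\bigr).
\]
For $j \in J$, Definition~\ref{outlier_definition} gives $\gamma_{ij} \ge z$, so $-\gamma_{ij}/\lambda \le -z/\lambda$, and everything reduces to upper-bounding the total increase $-\sum_t(\tau_i^{(t)}+\sigma_j^{(t)})$.

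The heart of the argument, and the step I expect to be the main obstacle, is a uniform per-iteration bound on $-\tau_i$ and $-\sigma_j$ extracted from the truncation in lines~\ref{line5} and~\ref{line9}. After line~\ref{line5} one has $\tau_i \ge \hat\theta^*_i - \phi'(\frac{1}{m})$; moreover the $\boldsymbol{\theta}^*$ fed into line~\ref{line4} has just been projected onto $\mathcal{C}_0$, so every entry satisfies $\theta^*_{ij}\ge\phi'(0)$ and therefore $\hat\theta^*_i = \max_j\theta^*_{ij} \ge \phi'(0)$. Combining these,
\[
-\tau_i \le \phi'\Bigl(\tfrac{1}{m}\Bigr) - \hat\theta^*_i \le \phi'\Bigl(\tfrac{1}{m}\Bigr) - \phi'(0) = \frac{1}{\beta-1}\Bigl(\frac{1}{m}\Bigr)^{\beta-1},
\]
using the explicit form of $\phi'$, and the analogous argument for the column projection yields $-\sigma_j \le \frac{1}{\beta-1}(\frac{1}{n})^{\beta-1}$. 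The delicate point to check is that this holds whether or not the Newton step was actually clipped, since in both cases the post-truncation value is at least $\hat\theta^*_i - \phi'(\frac{1}{m})$, making the bound uniform across all $t$.

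Finally I would sum. The per-iteration bound gives
\[
\tilde\theta_{ij} \le -\frac{z}{\lambda} + \frac{T}{\beta-1}\left[\Bigl(\frac{1}{m}\Bigr)^{\beta-1} + \Bigl(\frac{1}{n}\Bigr)^{\beta-1}\right],
\]
and imposing that the right-hand side stay below $\phi'(0) = -\frac{1}{\beta-1}$ rearranges, after multiplying through by $\beta-1>0$, to exactly the threshold $T < \bigl(\frac{z}{\lambda}(\beta-1)-1\bigr)/\bigl((\frac{1}{m})^{\beta-1}+(\frac{1}{n})^{\beta-1}\bigr)$ of condition~(\ref{robust_condition}); the hypothesis $z>\lambda/(\beta-1)$ is what keeps this numerator, and hence the admissible range of $T$, positive. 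Under~(\ref{robust_condition}) we conclude $\tilde\theta_{ij} < \phi'(0)$ for all $i$ and $j\in J$, whence $\theta^*_{ij}=\phi'(0)$ and $\pi^{\mathrm{output}}_{ij}=0$ by the first step, proving the proposition.
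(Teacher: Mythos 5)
Your proof is correct and follows essentially the same route as the paper's: a per-iteration bound $-\tau_i \le \frac{1}{\beta-1}(\frac{1}{m})^{\beta-1}$ and $-\sigma_j \le \frac{1}{\beta-1}(\frac{1}{n})^{\beta-1}$ obtained from the truncation steps together with $\theta^*_{ij}\ge\phi'(0)$, accumulated over $T$ iterations starting from $-\gamma_{ij}/\lambda\le -z/\lambda$, and then rearranged into the stated threshold on $T$. If anything, your write-up is slightly more complete than the paper's, since you make explicit the reduction from $\pi^{\mathrm{output}}_{ij}=0$ to $\tilde\theta_{ij}\le\phi'(0)$ via the final projection onto $\mathcal{C}_0$ and you track the accumulator $\tilde{\boldsymbol{\theta}}$ explicitly, both of which the paper leaves implicit.
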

\begin{proof}
 Before the algorithm starts, 
\begin{equation}
    - \frac{z}{\lambda} < \frac{1}{1 - \beta}
\end{equation}
holds. Since every element in $\boldsymbol{\theta}^{*}$ is greater than or equal to $\phi'(0) = \frac{1}{1 - \beta}$, the following inequality holds for every $i$ in the algorithm:
\begin{eqnarray}
    \tau_i & \geq & \frac{1}{1 - \beta} - \phi'\Bigl(\frac{1}{m} \Bigr)\\
           & = &\frac{1}{1 - \beta} - \left (\frac{1}{\beta - 1} \left ( \left (\frac{1}{m} \right)^{\beta - 1} - 1 \right ) \right ) \nonumber \\
    & = & - \frac{1}{\beta - 1} \left ( \frac{1}{m} \right )^{\beta - 1}.
\end{eqnarray}
Therefore, 
\begin{eqnarray}
    -\tau_i &\leq & \frac{1}{\beta - 1} \left ( \frac{1}{m} \right )^{\beta - 1}.
\end{eqnarray}
Similarly, for every $j$, the following inequality holds:
\begin{equation}
    - \sigma_j \leq \frac{1}{\beta - 1} \left ( \frac{1}{n} \right )^{\beta - 1}.
\end{equation}
Therefore, if the algorithm finished running $T$ times and the following inequality holds,
\begin{equation}
    - \frac{z}{\lambda} + T\times \frac{1}{\beta - 1} \left ( \frac{1}{m} \right )^{\beta - 1} + T \times \frac{1}{\beta - 1} \left ( \frac{1}{n} \right )^{\beta - 1}  < \frac{1}{1 - \beta},
\end{equation}
then,
\begin{eqnarray}
    \forall i, \tilde{\theta}_{ij} < \frac{1}{1 - \beta} \ \ \mathrm{if} \ \ j \in \textit{J} \\
\end{eqnarray}
holds.
Therefore,
\begin{eqnarray}
    \forall i, \boldsymbol{\pi}^{\mathrm{output}}_{ij} = 0 \ \ \mathrm{if} \ \ j \in \textit{J}.
\end{eqnarray}
\end{proof}

\end{document}